\documentclass{article}

\usepackage{amsthm}
\begingroup
    \makeatletter
    \@for\theoremstyle:=definition,remark,plain\do{%
        \expandafter\g@addto@macro\csname th@\theoremstyle\endcsname{%
            \addtolength\thm@preskip\parskip
            }%
        }
\endgroup

\usepackage{amsmath}
\usepackage{amssymb}
\usepackage{dsfont}
\usepackage{hyperref}
\usepackage{enumitem}
\usepackage{graphicx}
\usepackage{tikz}
\usetikzlibrary{positioning}
\usepackage[title]{appendix}

\newtheorem{AssumptionGeneral}{Assumption}
\newtheorem{theorem}{Theorem}[section]
\newtheorem{Def}[theorem]{Definition}
\newtheorem{Not}[theorem]{Notation}
\newtheorem{Anm}[theorem]{Remark}

\newtheorem{Lemma}[theorem]{Lemma}
\newtheorem{Kor}[theorem]{Corollary}
\newtheorem{Satz}[theorem]{Theorem}

\usepackage{arxiv}

\newcommand{\closedBall}[2]{\mathcal{B}[#1, #2]}
\newcommand{\coveringNumber}[2]{\mathcal{N}_c(#1, #2)}
\newcommand{\sprod}[2]{\langle #1, #2 \rangle}
\newcommand{\divergence}[3]{D_{#1}(#2 \ \Vert \ #3)}
\newcommand{\rvec}[1]{\mathbf{\mathfrak{#1}}}

\newcommand{\pspace}{\bigl(\Omega, \mathcal{F}, \mathbb{P} \bigr)}

\DeclareMathOperator*{\argmin}{arg\,min}

\usepackage[round]{natbib}

\bibliographystyle{apalike}

\title{PAC-Bayesian Learning of Optimization Algorithms}

\author{
 Michael Sucker \\
  Department of Mathematics \\
  University of T\"ubingen\\
  \texttt{michael.sucker@math.uni-tuebingen.de} \\
   \And
 Peter Ochs \\
  Department of Mathematics\\
  University of T\"ubingen\\
  \texttt{ochs@math.uni-tuebingen.de} \\
}

\begin{document}
\maketitle

\begin{abstract}
We apply the PAC-Bayes theory to the setting of learning-to-optimize. To the best of our knowledge, we present the first framework to learn optimization algorithms with provable generalization guarantees (PAC-bounds) and explicit trade-off between a high probability of convergence and a high convergence speed. Even in the limit case, where convergence is guaranteed, our learned optimization algorithms provably outperform related algorithms based on a (deterministic) worst-case analysis. Our results rely on PAC-Bayes bounds for general, unbounded loss-functions based on exponential families. By generalizing existing ideas, we reformulate the learning procedure into a one-dimensional minimization problem and study the possibility to find a global minimum, which enables the algorithmic realization of the learning procedure. As a proof-of-concept, we learn hyperparameters of standard optimization algorithms to empirically underline our theory.
\end{abstract}


\section{Introduction}
Let $\ell(\cdot,\theta)$ be an instance of a class of functions $(\ell(\cdot, \theta))_{\theta \in \Theta}$.
We consider the minimization problem: 
\begin{equation}
    \min_{x \in \mathbb{R}^n} \ \ell(x, \theta) \,.
\end{equation}
Our goal is the construction of an algorithm $\mathcal{A}(\alpha, \theta)$, depending on some hyperparameters $\alpha$, that is provably the best (on average) for the given class of problems. We contrast the majority of approaches in continuous optimization in two ways:
\begin{enumerate}
    \item[i)] Classical optimization theory studies the worst-case behaviour, which guarantees the same convergence for all problems that arise:
    $$
    \alpha^* \in \argmin_{\alpha \in \mathcal{H}} \ \sup_{\theta \in \Theta} \ell(\mathcal{A}(\alpha, \theta), \theta) \,.
    $$ 
    Thereby, this is often accompanied by rough estimates and ignores that some problems are more likely to occur than others. On the other hand, by using the additional information that $\theta$ is a realization of some random variable $\rvec{S}$, we seek for the average case in form of the mean function, usually called the risk: 
    $$
    \alpha^* \in \argmin_{\alpha \in \mathcal{H}} \mathbb{E}_\rvec{S}[\ell(\mathcal{A}(\alpha, \rvec{S}), \rvec{S})] \,.
    $$ 
    From an optimization perspective, this is a distinct approach leading to performance guarantees in expectation or with high probability over the draw of new problem instances. This allows us to exploit features of the considered class of problems beyond analytical accessible quantities such as the Lipschitz constant (of the gradient) or the strong convexity modulus, which are usually pessimistic and hard to compute.
    \item[ii)] Instead of analytically constructing an algorithm driven by intricate worst-case estimates, we train our algorithm (by learning) to be the best one on some samples $\{\ell(\cdot, \theta_i)\}_{i=1}^N$ and prove that the performance generalizes, in a suitable sense (PAC-Bayes), to the random function $\ell(\cdot, \rvec{S})$. This type of problem, i.e. minimizing the expected loss, is naturally found in the whole area of machine learning and cannot be solved directly, since the mean function is generally unknown. Consequently, one typically solves an approximate problem like empirical risk minimization in the hope that the solution found there will transfer: 
    $$
        \alpha^* \in \argmin_{\alpha \in \mathcal{H}} \frac{1}{N} \sum_{i=1}^N \ell(\mathcal{A}(\alpha, \theta_i), \theta_i) \,.
    $$
    However, through this, one is left with the problem of generalization, which is one of the key problems in machine learning in general. Therefore, one of the main concerns of learning-to-optimize are generalization bounds. A famous framework to provide such bounds is the PAC-Bayes framework, which allows for giving high-probability bounds on the true risk relative to the empirical risk.
\end{enumerate}
In this paper, we apply the PAC-Bayesian theory to the setting of learning-to-optimize. In doing so, we provide PAC-Bayesian generalization bounds for a general optimization algorithm on a general, unbounded loss function and we show how one can trade-off convergence guarantees for convergence speed. As a proof of concept, we illustrate our approach by learning, for example, the step-size $\tau$ and the inertial parameter $\beta$, i.e., $\alpha = (\tau, \beta)$, of a fixed number of iterations of the Heavy-ball update scheme given by:
\begin{equation}\label{Eq_UpdateHB}
    \begin{split}
        x^{(k+1)} 
        = \mathcal{HB}\Bigl(x^{(k)}, x^{(k-1)}, \alpha, \theta \Bigr) 
        := x^{(k)} - \tau \nabla \ell(x^{(k)}, \theta) + \beta (x^{(k)} - x^{(k-1)}),
    \end{split}
\end{equation}
which generalizes Gradient Descent for $\beta = 0$.

\subsection{Our Contributions}

\begin{itemize}
    \item We provide a general PAC-Bayes theorem for general, unbounded loss functions based on exponential families. In this framework, the role of the reference distribution (called the prior), the data dependence of the learned distribution (called the posterior) and the divergence term arise directly and naturally from the definition. Furthermore, this abstract approach allows for a unified implementation of the learning framework.
    \item We provide a principled way of excluding the case of the learnt algorithm's divergence from the considerations, which in turn allows us to apply our PAC theorem under a modified objective. Based on this, we give a theoretically grounded way of ensuring a given (user-specified) convergence probability during learning. Taken together, this allows us to trade-off convergence speed and the probability of convergence. To the best of our knowledge, both approaches are completely new and could also be very interesting for other learning approaches.
    \item We apply our PAC-Bayesian framework to the problem of learning-to-optimize and learn optimization algorithms by minimizing the PAC-Bayesian upper bound.
\end{itemize}

\section{Related Work}
The literature on both learning-to-optimize and the PAC-Bayes learning approach is vast. Hence, in the discussion of learning-to-optimize, we will mainly focus on approaches that provide certain theoretical guarantees. Especially, this excludes most model-free approaches, which replace the whole update step with a learnable mapping such as a neural network. \cite{CCCHLWY2021} provide a good overview of the variety of approaches in learning-to-optimize. Good introductory references for the PAC-Bayes approach are given by \cite{G2019} and \cite{A2021}. 

\paragraph{Learning-to-Optimize with Guarantees.} 
\cite{CCCHLWY2021} point out that learned optimization methods may lack theoretical guarantees for the sake of convergence speed. That said, there are applications where convergence guarantee is of highest priority. To underline this problem, \cite{MMC2019} provide an example where a purely learning-based approach fails to reconstruct the crucial details in medical image reconstruction. Also, they prove convergence of their method by restricting the output to descent directions, for which mathematical guarantees exist. The basic idea is to trace the learned object back to, or constrain it to, a mathematical object with convergence guarantees. Similarly, \cite{SVWBDSB2016} provide sufficient conditions under which the learned mapping is a proximal mapping. Related schemes under different assumptions and guarantees are given by \cite{CWE2016}, \cite{TBF2017}, \cite{TG2018}, \cite{BCSB2018}, \cite{RLWCWY2019}, \cite{SWK2019}, \cite{TRPW2021} and \cite{CEM2021}. A major advantage of these methods is the fact that the number of iterations is not restricted a priori. However, a major drawback is their restriction to specific algorithms and problems. Another approach, which limits the number of iterations, yet in principle can be applied to every iterative optimization algorithm, is unrolling, pioneered by \cite{GL2010}. \cite{XWGWW2016} study the IHT algorithm and show that it is, under some assumptions, able to achieve a linear convergence rate. Likewise, \cite{CLWY2018} establish a linear convergence rate for the unrolled ISTA. However, a difficulty in the theoretical analysis of unrolled algorithms is actually the notion of convergence itself, such that one rather has to consider the generalization performance. Only few works have addressed this: Either directly by means of Rademacher complexity \citep{CZRS2020}, or indirectly in form of a stability analysis \citep{KEKP2020}, as algorithmic stability is linked to generalization and learnability \citep{BE2000, BE2002, SSSS2010}. \textit{We consider the model-based approach of unrolling a general iterative optimization algorithm and provide generalization guarantees in form of PAC-bounds.}

\paragraph{PAC-Bounds through Change-of-Measure.}
The PAC-Bayesian framework allows us to give high probability bounds on the risk, either as an oracle bound or as an empirical bound. The key ingredients are so-called change-of-measure inequalities. The choice of such an inequality strongly influences the corresponding bound. The one used most often is based on a variational representation of the Kullback--Leibler divergence due to \cite{DV1975}, employed, for example, by \cite{C2004, C2007}. Yet, not all bounds are based on a variational representation, i.e., holding uniformly over all posterior distributions  \citep{RKSS2020}. However, most bounds involve the Kullback--Leibler divergence as a measure of proximity, e.g. those by \cite{Mc2003_1, Mc2003_2}, \cite{S2002}, \cite{LS2002}, or the general PAC-Bayes bound of \cite{GLL2009}. More recently, other divergences have been used: \cite{HJ2014} prove an inequality for the $\chi^2$-divergence, which is also used by \cite{L2017}. \cite{BGLR2016} and \cite{AG2018} use the Renyi-divergence ($\alpha$-divergence). \cite{OH2021} propose several PAC-bounds based on the general notion of f-divergences, which includes the Kullback--Leibler-, $\alpha$- and $\chi^2$-divergences. \textit{We develop a general PAC theorem based on exponential families. In this general approach, the role of prior, posterior, divergence and data dependence will be given naturally. Moreover, this approach allows us to implement a general learning framework that can be applied to a wide variety of algorithms.}

\paragraph{Boundedness of the Loss Function.}
A major drawback of many of the existing PAC-Bayes bounds is the assumption of a bounded loss-function. However, this assumption is mainly there to apply some exponential moment inequality like the Hoeffding- or Bernstein-inequality \citep{RKSS2020, A2021}.  Several ways have been developed to solve this problem: \cite{GLL2009} explicitly include the exponential moment in the bound, \cite{ARC2016} use so-called Hoeffding- and Bernstein-assumptions, \cite{C2004} restricts to the sub-Gaussian or sub-Gamma case. Another possibility, of which we make use of here, is to ensure the generalization or exponential moment bounds by properties of the algorithm in question. \cite{L2017} uses algorithmic stability to provide PAC-Bayes bounds for SGD. \textit{We consider suitable properties of optimization algorithms aside from algorithmic stability to ensure the exponential moment bounds. To the best of our knowledge, this has not been done before.}

\paragraph{Minimization of the PAC-Bound.}
The PAC-bound is a relative bound and relates the risk to other terms such as the empirical risk. Yet, it does not directly say anything about the actual numbers. Thus, one aims to minimize the bound: \cite{LC2001} compute non-vacuous numerical generalization bounds through a combination of PAC-bounds with a sensitivity analysis. \cite{DR2017} extend this by minimizing the PAC-bound directly. \cite{PRSS2021} also consider learning by minimizing the PAC-Bayes bound and provide very tight generalization bounds. \cite{TIWS2017} are able to solve the  minimization problem resulting from their PAC-bound by alternating minimization. Further, they provide sufficient conditions under which the resulting minimization problem is quasi-convex. \textit{We also follow this approach and consider learning as minimization of the PAC bound, however, applied to the context of learning-to-optimize.}

\paragraph{Choice of the Prior.}
A common difficulty in learning with PAC-Bayes bounds is the choice of the prior distribution, as it heavily influences the performance of the learned models and the generalization bound \citep{C2004, DHGAR2021, PRSS2021}. In part, this is due to the fact that the divergence term can dominate the bound, keeping the posterior close to the prior. This leads to the idea to choose a data- or distribution-dependent prior \citep{S2002, PASS2012, LLS2013, DR2018, PRSS2021}. \textit{As we also found the choice of the prior distribution to be crucial for the performance of our learned algorithms, we use a data-dependent prior. Further, we point out how the prior is essential in preserving necessary properties during learning. It is key to control the trade-off between convergence guarantee and convergence speed.}

\paragraph{More Generalization Bounds.}
There are many areas of research that study generalization bounds and have not been discussed here. Importantly, the vast field of stochastic optimization (SO) provides generalization bounds for specific algorithms. The main differences to our setting are the learning approach and the assumptions made:
\begin{itemize}[leftmargin=*]
    \item Learning approach: In most of the cases, the concrete algorithms studied in SO generate a single point by either minimizing the (regularized) empirical risk functional over a possibly large dataset, or by repeatedly updating the point estimate based on a newly drawn (small) batch of samples. Then, one studies the properties of this point in terms of the stationarity measure of the true risk functional (see e.g. \cite{BCN2018, DavisDrusvyatskiy2022, BHS2022}).
    \item Assumptions: Since the setting in SO is more explicit, more assumptions have to be made. Typical assumptions in SO are (weak) convexity \citep{SSSS2009_SCO, DavisDrusvyatskiy2019}, bounded gradients \citep{DefossezBottouBachUsunier2022}, bounded noise \citep{DavisDrusvyatskiy2022}, or at least smoothness \citep{KavisLevyCevher2022}, just to name a few.
\end{itemize}
\emph{We provide a principled way to learn a distribution over hyperparameters of an abstract algorithm under weak assumptions. Further, the methodology is independent of a concrete implementation and independent of the concrete choice of hyperparameters. Furthermore, we go explicitly beyond analytically tractable quantities.}

\section{Preliminaries and Notation}
If not further specified, we will endow every topological space $X$ with the corresponding Borel-$\sigma$-algebra $\mathcal{B}(X)$. If we consider a product space $X \times Y$ of two measurable spaces $(X, \mathcal{A})$ and $(Y, \mathcal{B})$, we endow it with the product-$\sigma$-algebra $\mathcal{A} \otimes \mathcal{B}$. We use the Fraktur-font to denote random variables. Let $\pspace$ be a probability space, $\Theta$ be a Polish space and 
$$
\rvec{S}: \pspace \longrightarrow \Theta
$$ 
be a random variable. Its distribution is denoted by $\mathbb{P}_\rvec{S}$, following the general notation $\mathbb{P}_\rvec{X}$ to denote the distribution of a random variable $\rvec{X}$. Integration w.r.t. $\mathbb{P}_\rvec{X}$ is denoted by $\mathbb{E}_\rvec{X}[g] := \mathbb{E}_\rvec{X}[g(\rvec{X})] := \int g(x) \ \mathbb{P}_\rvec{X}(dx)$. Finally, $\mathds{1}_A$ denotes the indicator function of a set A, which is one for $x \in A$ and zero else, and $\log$ denotes the natural logarithm.

\begin{Def}
    Let $N \in \mathbb{N}$. Further, let $\pspace$ be a probability space, $\rvec{S}_i: \pspace \longrightarrow \Theta$, $i = 1,...,N$, be random variables. A measurable function 
    \begin{align*}
        \rvec{D}_N: \pspace \longrightarrow \Bigl( \prod_{i=1}^N \Theta,  \bigotimes_{i=1}^N \mathcal{B}(\Theta) \Bigr), \  
        \omega \mapsto \prod_{i=1}^N \rvec{S}_i(\omega)
    \end{align*}
    is called a \emph{dataset}. If the induced distribution $\mathbb{P}_{\rvec{D}_N}$ factorizes into the product of the marginals, i.e., if it satisfies
    $\mathbb{P}_{\rvec{D}_N} = \bigotimes_{i=1}^N \mathbb{P}_{\rvec{S}_i}$,
    it is called \emph{independent} and if, additionally, it satisfies
    $\mathbb{P}_{\rvec{D}_N} = \bigotimes_{i=1}^N \mathbb{P}_{\rvec{S}}$, it is called \emph{i.i.d.}
\end{Def}

\begin{Not}
    The space $( \prod_{i=1}^N \Theta, \ \bigotimes_{i=1}^N \mathcal{B}(\Theta) )$ will be denoted by $(\mathcal{D}_N, \mathcal{B}(\mathcal{D}_N))$. Since $\bigotimes_{i=1}^N \mathcal{B}(\Theta)$ is indeed the Borel-$\sigma$-algebra of $\mathcal{D}_N$, it will not be mentioned anymore.
\end{Not}

In the PAC-Bayesian framework, generalization bounds typically involve a so-called posterior distribution, which in turn is referred to as a data-dependent distribution. Often, this term is left unspecified. However, as also pointed out by \cite{RKSS2020}, this is an instance of a Markov kernel. Another commonly used name are regular conditional probabilities, following the definition of a regular conditional distribution \citep{C2004, A2008}.

\begin{Def}\label{Def_Data_Dependent_Distribution}
    Let $\rvec{D}_N: \pspace \longrightarrow \mathcal{D}_N$ be a dataset and $\mathcal{H}$ a Polish space. A Markov kernel from $\mathcal{D}_N$ to $\mathcal{H}$ is called a \emph{data-dependent distribution}.
\end{Def}

\begin{Anm}
    The assumption of a Polish space is not very restrictive (for practical considerations) and sufficient to ensure the existence of such Markov kernels. Both definitions can be found in the supplementary material \ref{Appendix_Definitions}.
\end{Anm}

The following theory will be based on exponential families, which are a special class of probability distributions with a specific, mathematically convenient form. 

\begin{Def}
    Let $\Lambda \subset \mathbb{R}^k$. A family of probability measures $(\mathbb{Q}_\lambda)_{\lambda \in \Lambda}$ on a measurable space $(\mathcal{H}, \mathcal{B}(\mathcal{H}))$ is called an \emph{exponential family}, if there is a dominating probability measure $\mathbb{P}_\rvec{H}$, measurable functions $\eta_1, ..., \eta_k: \Lambda \longrightarrow \mathbb{R}$, a measurable function $A : \Lambda \longrightarrow \mathbb{R}_{>0}$, measurable functions $T_1,...,T_k: \mathcal{H} \longrightarrow \mathbb{R}$ and $h: \mathcal{H} \longrightarrow \mathbb{R}_{> 0}$, such that every $\mathbb{Q}_\lambda$ has a $\mathbb{P}_{\rvec{H}}$-density of the form:
    $$
        \frac{d\mathbb{Q}_\lambda}{d \mathbb{P}_\rvec{H}}(\alpha) = h(\alpha) A(\lambda) \exp\bigl(\sprod{\eta(\lambda)}{T(\alpha)}\bigr), \quad \mathbb{P}_\rvec{H}-a.s.
    $$
    where $\eta := (\eta_1,...,\eta_k)$ and $T := (T_1,...,T_k)$.
\end{Def}

In the PAC-Bayesian setting, the dominating measure $\mathbb{P}_\rvec{H}$ is usually referred to as the prior and every distribution $\mathbb{Q} \ll \mathbb{P}_\rvec{H}$ is referred to as a posterior. Note that this deviates from the standard definitions of prior and posterior in Bayesian statistics, which are linked through the likelihood.
We use a similar notation as in \cite{BN2014} and denote
\begin{equation} \label{Eq_DefKappa}
    \begin{split}
        &c(\lambda) := \int_{\mathcal{H}} h(\alpha) \exp(\sprod{\eta(\lambda)}{T(\alpha)}) \ \mathbb{P}_\rvec{H}(d\alpha) \\
        &\kappa(\lambda) := \log(c(\lambda)),
    \end{split}
\end{equation}
or short, $\kappa = \log(c)$. It holds that $A(\lambda) = c(\lambda)^{-1}$.

\begin{Anm}\label{Rem_KappaConvex}
    In the case $h = 1$ and $\eta(\lambda) = \lambda$, $c$ is the Laplace transform (moment generating function) of the push-forward measure $\mathbb{P}_\rvec{H} \circ T^{-1}$ and $\kappa$ the corresponding log-Laplace transform (cumulant-generating function). Further, if $\eta(\lambda)$ actually describes a lower-dimensional manifold or curve in $\mathbb{R}^k$, $(\mathbb{Q}_\lambda)_{\lambda \in \Lambda}$ is sometimes also called a curved exponential family \citep{E1975}.
\end{Anm}

\begin{Anm}
    In the following we will consider data-dependent exponential families, i.e., the sufficient statistic $T$ additionally depends on a dataset $\rvec{D}_N$. Hence, also $c$ and $\kappa$ do depend on $\rvec{D}_N$. Thus, we will assume that $T: \mathcal{H} \times \mathcal{D}_N \longrightarrow \mathbb{R}$ is measurable. In this case, $\mathbb{Q}_\lambda$ is indeed a data-dependent distribution.
\end{Anm}

\begin{Not}
    For notational simplicity, we will omit the dependence of $\mathbb{Q}_\lambda$, $T$, $c$ and $\kappa$ on the dataset $\rvec{D}_N$.
\end{Not}

For the rest of the paper, we assume that we are given an exponential family $(\mathbb{Q}_\lambda)_{\lambda \in \Lambda}$ w.r.t. $\mathbb{P}_\rvec{H}$ of the form:
\begin{equation}\label{EQ_ExpFamily}
    \frac{d\mathbb{Q}_\lambda}{d \mathbb{P}_\rvec{H}}(\alpha) = \frac{h(\alpha)}{c(\lambda)} \exp\Bigl(\sprod{\eta(\lambda)}{T(\alpha)}\Bigr)\,.
\end{equation}

Finally, since the loss-function is neither assumed to be bounded nor to satisfy any self-bounding or bounded-difference property, the following result will be needed. It states that non-negative random variables with finite second moment satisfy a one-sided sub-Gaussian inequality. It can be found as Exercise 2.9 on page 47 in the book by \cite{BLM2013}.

\begin{Lemma}\label{LemmaSubGaussianLowerTail}
    Let $\rvec{X}$ be a non-negative random variable with finite second moment. Then, for every $\lambda > 0$ it holds:
    $$
        \mathbb{E}\Bigl[ \exp \Bigl( -\lambda(\rvec{X} - \mathbb{E}[\rvec{X}]) \Bigr) \Bigr] \le 
        \exp \Bigl( \frac{\lambda^2}{2} \mathbb{E}[\rvec{X}^2] \Bigr) \,.
    $$
\end{Lemma}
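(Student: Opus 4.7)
The plan is to use the standard Chernoff-style bound for the lower tail based on an elementary second-order Taylor estimate of $e^{-u}$. The key point is that for $\lambda > 0$ and $\rvec{X} \ge 0$ the argument $u := \lambda \rvec{X}$ is non-negative, which gives access to a sharper pointwise bound than the usual $e^u \le 1 + u + u^2 \exp(|u|)/2$ available in the two-sided case.

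Concretely, I would first verify the pointwise inequality
\begin{equation*}
    e^{-u} \le 1 - u + \frac{u^2}{2} \quad \text{for all } u \ge 0,
\end{equation*}
which is immediate from examining the function $f(u) := 1 - u + u^2/2 - e^{-u}$: one has $f(0) = 0$, $f'(0) = 0$ and $f''(u) = 1 - e^{-u} \ge 0$ on $[0,\infty)$, so $f \ge 0$ there. Substituting $u = \lambda \rvec{X}$, which is non-negative by assumption, and taking expectations yields
\begin{equation*}
    \mathbb{E}\bigl[\exp(-\lambda \rvec{X})\bigr] \le 1 - \lambda \mathbb{E}[\rvec{X}] + \frac{\lambda^2}{2} \mathbb{E}[\rvec{X}^2],
\end{equation*}
where the finiteness of the second moment ensures the right-hand side is well defined.

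Next I would apply the ubiquitous bound $1 + y \le e^y$ (valid for all $y \in \mathbb{R}$) to the right-hand side with $y := -\lambda \mathbb{E}[\rvec{X}] + \tfrac{\lambda^2}{2}\mathbb{E}[\rvec{X}^2]$, obtaining
\begin{equation*}
    \mathbb{E}\bigl[\exp(-\lambda \rvec{X})\bigr] \le \exp\Bigl(-\lambda \mathbb{E}[\rvec{X}] + \frac{\lambda^2}{2} \mathbb{E}[\rvec{X}^2]\Bigr).
\end{equation*}
Multiplying both sides by $\exp(\lambda \mathbb{E}[\rvec{X}])$, which is a deterministic scalar and may be pulled inside the expectation, yields exactly the claimed inequality.

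There is really no serious obstacle here; the only subtlety is recognizing that the non-negativity of $\rvec{X}$ (combined with $\lambda > 0$) is what allows the \emph{clean} quadratic upper bound on $e^{-u}$ without any extra exponential factor of the type needed in the two-sided sub-Gaussian argument. The assumption of finite second moment is used to make both sides of the intermediate estimate finite, and no boundedness or concentration hypothesis beyond $\rvec{X} \ge 0$ and $\mathbb{E}[\rvec{X}^2] < \infty$ is required.
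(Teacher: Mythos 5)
Your proof is correct. Note that the paper does not actually prove this lemma; it only cites it as Exercise 2.9 in \cite{BLM2013}, so there is no in-paper argument to compare against. Your route --- the pointwise bound $e^{-u} \le 1 - u + u^2/2$ for $u \ge 0$ (justified via $f(0)=f'(0)=0$ and $f''\ge 0$), followed by taking expectations, applying $1+y \le e^y$, and multiplying through by $\exp(\lambda \mathbb{E}[\rvec{X}])$ --- is the standard solution to that exercise, and every step checks out; the only implicit detail worth a half-sentence is that $\mathbb{E}[\rvec{X}] < \infty$ follows from the finite second moment via Jensen or Cauchy--Schwarz.
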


\section{Problem Setup}
As described in the introduction, we aim to solve the following minimization problem with a random objective function $\ell$ under Assumption~\ref{AssumptionLossNonnegative}:
$$
    \min_{x \in \mathbb{R}^n} \ell(x, \rvec{S})\,.
$$
\begin{AssumptionGeneral}\label{AssumptionLossNonnegative}
    $\Theta$ is a Polish space, $\rvec{S}:\pspace \longrightarrow \Theta$ is a random variable, and $\ell: \mathbb{R}^n \times \Theta \longrightarrow \mathbb{R}$ is measurable and non-negative.
\end{AssumptionGeneral}

\begin{Anm}
    The non-negativity assumption is not restrictive, as any lower-bounded function $f$ can be rescaled according to  $\ell(x, \theta) := f(x, \theta) - \inf_{x \in \mathbb{R}^n} f(x,\theta)$.
\end{Anm}
To actually solve this problem for a concrete realization $\theta$, we apply an optimization algorithm $\mathcal{A}$ to $\ell$. For this, we will consider a similar setting as in \cite{L2017}, i.e., randomized algorithms are considered as deterministic algorithms with randomized hyperparameters. 
\begin{Def}
    Let $\mathcal{H}$ be a Polish space. A measurable function 
    $$
        \mathcal{A}: \mathcal{H} \times \mathbb{R}^n \times \Theta \longrightarrow \mathbb{R}^n
    $$
    is called a \emph{parametric algorithm} and $\mathcal{H}$ is called the \emph{hyperparameter space} of $\mathcal{A}$. A random variable
    $$
        \rvec{H}: \pspace \longrightarrow \mathcal{H}
    $$
    is called a \emph{hyperparameter} of $\mathcal{A}$.
\end{Def}

\begin{Anm}
    $\mathcal{H}$ corresponds to the hyperparameters of the algorithm, $\mathbb{R}^n$ to the initialization and $\Theta$ to the parameters of the problem instance.
\end{Anm}

Learning now refers to learning a distribution $\mathbb{Q}$ on $\mathcal{H}$. For this, one needs a reference distribution:
\begin{AssumptionGeneral}
    $\mathcal{A}$ is a parametric optimization algorithm with hyperparameter space $\mathcal{H}$. The prior $\mathbb{P}_{\rvec{H}}$ is induced by hyperparameters $\rvec{H}: \pspace \longrightarrow \mathcal{H}$ that are independent of the dataset $\rvec{D}_N$ and $\rvec{S}$. The initialization $x^{(0)} \in \mathbb{R}^n$ is given and fixed.
\end{AssumptionGeneral}

The initialization and the probability space $\pspace$ will not be mentioned anymore. 
We define the risk of a randomized parametric algorithm in the usual way:

\begin{Def}
    Let $N \in \mathbb{N}$ and let $\rvec{D}_N = (\rvec{S}_1,...,\rvec{S}_N)$ be a data set. Further, let $\mathcal{A}$ be a parametric algorithm with hyperparameter space $\mathcal{H}$. Furthermore, let $\rvec{S} \sim \mathbb{P}_\rvec{S}$ be independent of $\rvec{D}_N$. Finally, let $\ell: \mathbb{R}^n \times \Theta \longrightarrow \mathbb{R}_{\ge 0}$ satisfy Assumption~\ref{AssumptionLossNonnegative}. The risk of $\mathcal{A}$ is defined as the measurable function:
    \begin{align*}
        \mathcal{R}: \mathcal{H} \longrightarrow \mathbb{R}_{\ge 0}, \
        \alpha \mapsto \mathbb{E}_\rvec{S}[ 
        \ell \bigl(\mathcal{A}(\alpha, \rvec{S}), \rvec{S}\bigr) 
        ] \,.
    \end{align*}
    Similarly, the empirical risk of $\mathcal{A}$ on $\rvec{D}_N$ is defined as the measurable map $\hat{\mathcal{R}}: \mathcal{H} \times \mathcal{D}_N \longrightarrow \mathbb{R}_{\ge 0}$ with:
    \begin{align*}
        \hat{\mathcal{R}}(\alpha, \rvec{D}_N) =
        \frac{1}{N} \sum_{i=1}^N \ell \bigl(\mathcal{A}(\alpha, \rvec{S}_i), \rvec{S}_i \bigr) \,.
    \end{align*}
\end{Def}

\begin{Not}
    We also use $\ell(\alpha, \theta) := \ell (\mathcal{A}(\alpha, \theta), \theta)$.
\end{Not}

\section{General PAC-Bayesian Theorem}\label{Sec_PAC}
In this section we derive a general PAC-Bayes bound, which will be used to bound the generalization risk of the learned parametric optimization algorithm $\mathcal{A}$.
As stated above, PAC-Bayesian theorems are usually based on a change-of-measure (in-)equality. The following lemma is a form of the Donsker--Varadhan variational formulation. Though it is not new, we state it nevertheless for the sake of completeness. The proof is especially easy in this case and is given in the supplementary material \ref{Proof_Lem_DonskerVaradhanForExponentialFamilies}. 

\begin{Lemma}\label{Lem_DonskerVaradhanForExponentialFamilies}
    Let $(\mathbb{Q}_\lambda)_{\lambda \in \Lambda}$ be an exponential family on $\mathcal{H}$ w.r.t. $\mathbb{P}_{\rvec{H}}$ of the form (\ref{EQ_ExpFamily}) and $\kappa$ as in (\ref{Eq_DefKappa}). Then it holds:
    $$
        \kappa(\lambda) = \sup_{\mathbb{Q} \ll \mathbb{P}_\rvec{H}} \mathbb{E}_\mathbb{Q}\bigl[ \sprod{\eta(\lambda)}{T} + \log(h) \bigr] - \divergence{KL}{\mathbb{Q}}{\mathbb{P}_\rvec{H}} \,.
    $$
    Furthermore, the supremum is attained at $\mathbb{Q}_\lambda$.
\end{Lemma}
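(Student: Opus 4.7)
The plan is to exploit the fact that $\mathbb{Q}_\lambda$ itself is a measure $\ll \mathbb{P}_\rvec{H}$ with an explicit density, and to compare any candidate $\mathbb{Q} \ll \mathbb{P}_\rvec{H}$ to it via the non-negativity of the Kullback--Leibler divergence. Concretely, I would proceed as follows.

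First, I would fix $\lambda \in \Lambda$ and observe that, because $h > 0$ and $c(\lambda) > 0$, the density $\frac{d\mathbb{Q}_\lambda}{d\mathbb{P}_\rvec{H}}$ is strictly positive, so $\mathbb{Q}_\lambda$ and $\mathbb{P}_\rvec{H}$ share the same null sets. Hence any $\mathbb{Q} \ll \mathbb{P}_\rvec{H}$ also satisfies $\mathbb{Q} \ll \mathbb{Q}_\lambda$, and the chain rule for Radon--Nikodym derivatives gives
\[
\frac{d\mathbb{Q}}{d\mathbb{Q}_\lambda}(\alpha) = \frac{d\mathbb{Q}}{d\mathbb{P}_\rvec{H}}(\alpha)\cdot \frac{c(\lambda)}{h(\alpha)\exp\!\bigl(\sprod{\eta(\lambda)}{T(\alpha)}\bigr)}\,,
\]
$\mathbb{P}_\rvec{H}$-a.s., and therefore $\mathbb{Q}$-a.s. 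Taking logarithms and integrating against $\mathbb{Q}$, this yields the key identity
\[
\divergence{KL}{\mathbb{Q}}{\mathbb{Q}_\lambda} = \divergence{KL}{\mathbb{Q}}{\mathbb{P}_\rvec{H}} + \kappa(\lambda) - \mathbb{E}_\mathbb{Q}\bigl[\sprod{\eta(\lambda)}{T} + \log(h)\bigr]\,.
\]

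Second, I would rearrange this identity to
\[
\kappa(\lambda) = \mathbb{E}_\mathbb{Q}\bigl[\sprod{\eta(\lambda)}{T} + \log(h)\bigr] - \divergence{KL}{\mathbb{Q}}{\mathbb{P}_\rvec{H}} + \divergence{KL}{\mathbb{Q}}{\mathbb{Q}_\lambda}\,.
\]
Invoking Gibbs' inequality $\divergence{KL}{\mathbb{Q}}{\mathbb{Q}_\lambda} \ge 0$, with equality iff $\mathbb{Q} = \mathbb{Q}_\lambda$, immediately gives the upper-bound direction for every admissible $\mathbb{Q}$, and plugging in $\mathbb{Q} = \mathbb{Q}_\lambda$ (for which $\divergence{KL}{\mathbb{Q}_\lambda}{\mathbb{Q}_\lambda}=0$) shows that the bound is actually attained, proving both the variational formula and that the supremum is realized at $\mathbb{Q}_\lambda$.

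The main care point, rather than a real obstacle, is the measure-theoretic bookkeeping when rearranging the identity: one has to ensure that the expectations $\mathbb{E}_\mathbb{Q}[\sprod{\eta(\lambda)}{T}]$ and $\mathbb{E}_\mathbb{Q}[\log h]$ are well-defined (or at least that the rearrangement is valid in the extended sense). This can be handled by treating separately the case where $\divergence{KL}{\mathbb{Q}}{\mathbb{P}_\rvec{H}} = +\infty$ (in which the inequality is trivial) and, in the finite case, using $\log \frac{d\mathbb{Q}}{d\mathbb{P}_\rvec{H}} \in L^1(\mathbb{Q})$ together with $\kappa(\lambda) < \infty$ and finiteness of $\divergence{KL}{\mathbb{Q}}{\mathbb{Q}_\lambda}$ to justify splitting the integrand term by term. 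Everything else is then a one-line consequence of Gibbs' inequality.
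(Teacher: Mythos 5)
Your argument is correct, and it reaches the conclusion by a genuinely different route than the paper. The paper proves the inequality $\kappa(\lambda) \ge \mathbb{E}_\mathbb{Q}\bigl[\sprod{\eta(\lambda)}{T} + \log(h)\bigr] - \divergence{KL}{\mathbb{Q}}{\mathbb{P}_\rvec{H}}$ by rewriting the right-hand side as $\int \log\bigl( \tfrac{h}{d\mathbb{Q}/d\mathbb{P}_\rvec{H}} \exp(\sprod{\eta(\lambda)}{T}) \bigr)\, d\mathbb{Q}$ and applying Jensen's inequality to the concave logarithm, and then, in a separate second step, verifies attainment by directly computing $\divergence{KL}{\mathbb{Q}_\lambda}{\mathbb{P}_\rvec{H}}$. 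You instead establish the exact decomposition
\[
\kappa(\lambda) = \mathbb{E}_\mathbb{Q}\bigl[\sprod{\eta(\lambda)}{T} + \log(h)\bigr] - \divergence{KL}{\mathbb{Q}}{\mathbb{P}_\rvec{H}} + \divergence{KL}{\mathbb{Q}}{\mathbb{Q}_\lambda}
\]
via the chain rule for Radon--Nikodym derivatives (valid because the density of $\mathbb{Q}_\lambda$ is strictly positive, so $\mathbb{Q} \ll \mathbb{P}_\rvec{H}$ implies $\mathbb{Q} \ll \mathbb{Q}_\lambda$), and then invoke $\divergence{KL}{\mathbb{Q}}{\mathbb{Q}_\lambda} \ge 0$. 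The two arguments are cousins, since Gibbs' inequality is itself a consequence of Jensen, but yours has the advantage of making the slack in the variational bound explicit as $\divergence{KL}{\mathbb{Q}}{\mathbb{Q}_\lambda}$: the upper bound, the attainment at $\mathbb{Q}_\lambda$, and even the uniqueness of the maximizer all fall out of a single identity, whereas the paper needs two separate computations and does not obtain uniqueness. You are also more explicit than the paper about the integrability bookkeeping (the $\infty - \infty$ issue when $\divergence{KL}{\mathbb{Q}}{\mathbb{P}_\rvec{H}} = +\infty$ or when $\mathbb{E}_\mathbb{Q}[\sprod{\eta(\lambda)}{T} + \log(h)]$ fails to exist), which the paper's proof silently glosses over; your proposed case split is the standard and correct way to handle it.
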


This change-of-measure equality allows to directly give the PAC-Bayesian theorem in its general form. Basically, one uses Markov's inequality to give a probabilistic bound on $\kappa(\lambda)$. The restriction to a finite set is made such that the bound also holds uniformly in $\lambda \in \Lambda$ by a union-bound. This idea appeared previously \citep{LC2001, C2007, A2021}.

\begin{Satz}\label{PAPER_PACBayesExponentialFamiliy}
    Let $(\mathbb{Q}_\lambda)_{\lambda \in \Lambda}$ be an exponential family on $\mathcal{H}$ of the form (\ref{EQ_ExpFamily}). Further, let $\Lambda$ be a finite set with cardinality $\vert \Lambda \vert$ and let $\mathbb{E}_{\rvec{D}_N}[c(\lambda)] \le 1$ for all $\lambda \in \Lambda$. Then, for $\epsilon > 0$, it holds that:
    \begin{align*}
        \mathbb{P}_{\rvec{D}_N} \Bigl\{ \forall \lambda \in \Lambda, \ \forall \mathbb{Q} \ll \mathbb{P}_\rvec{H}  :  
        \mathbb{E}_{\mathbb{Q}}\bigl[ \sprod{\eta(\lambda)}{T} + \log(h) \bigr] 
        \le 
        \divergence{KL}{\mathbb{Q}}{\mathbb{P}_\rvec{H}} + \log\Bigl(\frac{\vert \Lambda \vert}{\epsilon}\Bigr)
        \Bigr\} \ge 1-\epsilon \,.
    \end{align*}
\end{Satz}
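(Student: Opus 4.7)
The plan is to convert the claim about every $\mathbb{Q} \ll \mathbb{P}_\rvec{H}$ into a single deterministic statement about $\kappa(\lambda)$ via the variational formula of Lemma~\ref{Lem_DonskerVaradhanForExponentialFamilies}, and then control $c(\lambda) = \exp(\kappa(\lambda))$ uniformly in $\lambda$ by a Markov step combined with a finite union bound. The finiteness of $\Lambda$ is what makes the union bound painless, and the hypothesis $\mathbb{E}_{\rvec{D}_N}[c(\lambda)] \le 1$ supplies exactly the first moment Markov needs.

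Concretely, first I would observe that the assertion
$$\mathbb{E}_{\mathbb{Q}}\bigl[ \sprod{\eta(\lambda)}{T} + \log(h) \bigr] \le \divergence{KL}{\mathbb{Q}}{\mathbb{P}_\rvec{H}} + \log(\vert \Lambda \vert / \epsilon) \quad \text{for all } \mathbb{Q} \ll \mathbb{P}_\rvec{H}$$
is, by rearranging and taking the supremum over $\mathbb{Q}$, equivalent to the single inequality $\kappa(\lambda) \le \log(\vert \Lambda \vert / \epsilon)$, since Lemma~\ref{Lem_DonskerVaradhanForExponentialFamilies} identifies $\kappa(\lambda)$ with that supremum. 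Requiring this uniformly in $\lambda \in \Lambda$ therefore reduces to the event $\{c(\lambda) \le \vert \Lambda \vert / \epsilon \ \forall \lambda \in \Lambda\}$, so the whole theorem will follow once this event is shown to have probability at least $1-\epsilon$.

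For that event, $c(\lambda)$ is a non-negative random variable (as a function of $\rvec{D}_N$ through $T$), so Markov's inequality and the assumption $\mathbb{E}_{\rvec{D}_N}[c(\lambda)] \le 1$ give, for each fixed $\lambda \in \Lambda$,
$$\mathbb{P}_{\rvec{D}_N}\Bigl\{ c(\lambda) > \tfrac{\vert \Lambda \vert}{\epsilon} \Bigr\} \le \frac{\epsilon}{\vert \Lambda \vert} \, \mathbb{E}_{\rvec{D}_N}[c(\lambda)] \le \frac{\epsilon}{\vert \Lambda \vert}\,.$$
A union bound over the $\vert \Lambda \vert$ many values of $\lambda$ then yields $\mathbb{P}_{\rvec{D}_N}\{\exists \lambda: c(\lambda) > \vert \Lambda \vert / \epsilon\} \le \epsilon$, i.e.\ the good event has probability at least $1-\epsilon$. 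Combining with the reduction from the first step finishes the proof.

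I do not expect a genuine obstacle here: the argument is a clean chaining of Donsker--Varadhan, Markov, and a union bound, and the hypotheses of the theorem are tailored to make each step go through verbatim. The only point that warrants a little care is making sure the "for all $\mathbb{Q}$" quantifier is handled correctly --- one should use the equality in Lemma~\ref{Lem_DonskerVaradhanForExponentialFamilies} (not merely its easy $\ge$ direction) so that bounding $\kappa(\lambda)$ is actually equivalent to the PAC-style inequality for every posterior $\mathbb{Q}$, rather than merely sufficient for some of them. The finiteness of $\Lambda$ is essential; extending to an infinite $\Lambda$ would require discretization or a more refined chaining argument, but that is outside the scope of the present statement.
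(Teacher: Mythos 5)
Your proof is correct and follows essentially the same route as the paper's: Markov's inequality on the non-negative random variable $c(\lambda)$, a union bound over the finite set $\Lambda$, and the equality (not just the inequality) in Lemma~\ref{Lem_DonskerVaradhanForExponentialFamilies} to translate the bound on $\kappa(\lambda)$ into the uniform statement over all posteriors $\mathbb{Q} \ll \mathbb{P}_\rvec{H}$. The only cosmetic difference is that you apply the Donsker--Varadhan reduction first and threshold $c(\lambda)$ at $\vert \Lambda \vert / \epsilon$ directly, whereas the paper works with $\kappa(\lambda) > s$ and substitutes $s = \log(\vert \Lambda \vert / \epsilon)$ at the end.
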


The proof of Theorem \ref{PAPER_PACBayesExponentialFamiliy} is given in the supplementary material \ref{Proof_PAPER_PACBayesExponentialFamiliy}.

\begin{Anm}
    The restriction to a finite set gets problematic, if the term $\log(\vert \Lambda \vert)$ influences the bound strongly. In our application the loss is usually much larger than $\log(\vert \Lambda \vert)$, such that this is not the case even for large $\vert \Lambda \vert$. 
\end{Anm}

\begin{Anm}
    By a chaining argument, the finiteness assumption on $\Lambda$ can be relaxed to assuming that $\Lambda$ is totally bounded (e.g. compact) and that the growth of $\kappa$ can be controlled on balls of radius $\delta$. For more details, we refer to the supplementary material \ref{proof_LambdaFiniteNotNecessary}, as this was only found after the rebuttal phase. Also, note that the experiments in Section \ref{Sec_Experiments} were carried out with the setting in Theorem \ref{PAPER_PACBayesExponentialFamiliy}.
\end{Anm}

For the rest of the paper, we will have $h \equiv 1$. Corollary~\ref{PAPER_KorGeneralizationBoundExpFamily} shows how to transform this general result into a high-probability bound on the risk. It follows directly by using the properties of the Euclidean scalar product.

\begin{Kor}\label{PAPER_KorGeneralizationBoundExpFamily}
    Let the assumptions of Theorem~\ref{PAPER_PACBayesExponentialFamiliy} hold. Furthermore, assume that there are $T': \mathcal{H} \times \mathcal{D}_N \longrightarrow \mathbb{R}^{k-1}$, $\eta': \Lambda \longrightarrow \mathbb{R}^{k-1}$ and $\eta_1: \Lambda \longrightarrow \mathbb{R}_{> 0}$, such that $\eta$ and $T$ are given by:
    \begin{align*}
        \eta(\lambda) = \bigl(\eta_1(\lambda), \ \eta'(\lambda) \bigr)\, , \ \ \ 
        T(\alpha, \rvec{D}_N) = \bigl(\mathcal{R}(\alpha) - \hat{\mathcal{R}}(\alpha, \rvec{D}_N), \ T'(\alpha, \rvec{D}_N) \bigr) \,.
    \end{align*}
    Then it holds for $\epsilon > 0$:
    \begin{align}\label{Eq_CorPAC}
        \begin{split}
            \mathbb{P}_{\rvec{D}_N}\Bigl\{ \forall \lambda \in \Lambda, \forall \mathbb{Q} \ll \mathbb{P}_\rvec{H} \ : \
            \mathbb{E}_\mathbb{Q}[\mathcal{R}] &\le \mathbb{E}_\mathbb{Q}[\hat{\mathcal{R}}] \\
            &+ \frac{1}{\eta_1(\lambda)} 
            \bigl(
            \divergence{KL}{\mathbb{Q}}{\mathbb{P}_\rvec{H}} + \log\Bigl(\frac{\vert \Lambda \vert}{\epsilon}\Bigr)  
            - \mathbb{E}_\mathbb{Q}\bigl[\sprod{\eta'(\lambda)}{T'}\bigr] \bigr) \Bigr\} \ge 1-\epsilon \,.
        \end{split}
    \end{align}
\end{Kor}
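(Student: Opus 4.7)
The plan is to derive the corollary as a direct algebraic rearrangement of Theorem~\ref{PAPER_PACBayesExponentialFamiliy} using the assumed block structure of $\eta$ and $T$, together with the simplification $h \equiv 1$.

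First I would fix the event of probability at least $1-\epsilon$ given by Theorem~\ref{PAPER_PACBayesExponentialFamiliy}, on which the inequality
$$\mathbb{E}_{\mathbb{Q}}\bigl[\sprod{\eta(\lambda)}{T} + \log(h)\bigr] \le \divergence{KL}{\mathbb{Q}}{\mathbb{P}_\rvec{H}} + \log\Bigl(\tfrac{|\Lambda|}{\epsilon}\Bigr)$$
holds simultaneously for all $\lambda \in \Lambda$ and all $\mathbb{Q} \ll \mathbb{P}_\rvec{H}$. Since $h \equiv 1$, the term $\log(h)$ vanishes. Exploiting the product structure of the Euclidean inner product and the decompositions
$\eta(\lambda) = (\eta_1(\lambda), \eta'(\lambda))$ and $T(\alpha, \rvec{D}_N) = (\mathcal{R}(\alpha) - \hat{\mathcal{R}}(\alpha, \rvec{D}_N),\, T'(\alpha, \rvec{D}_N))$, the inner product splits as
$$\sprod{\eta(\lambda)}{T(\alpha, \rvec{D}_N)} = \eta_1(\lambda)\bigl(\mathcal{R}(\alpha) - \hat{\mathcal{R}}(\alpha, \rvec{D}_N)\bigr) + \sprod{\eta'(\lambda)}{T'(\alpha, \rvec{D}_N)}.$$
By linearity of the expectation $\mathbb{E}_\mathbb{Q}$ (and because $\eta_1(\lambda)$ and $\eta'(\lambda)$ do not depend on $\alpha$), this turns the statement of Theorem~\ref{PAPER_PACBayesExponentialFamiliy} into
$$\eta_1(\lambda)\bigl(\mathbb{E}_\mathbb{Q}[\mathcal{R}] - \mathbb{E}_\mathbb{Q}[\hat{\mathcal{R}}]\bigr) + \mathbb{E}_\mathbb{Q}\bigl[\sprod{\eta'(\lambda)}{T'}\bigr] \le \divergence{KL}{\mathbb{Q}}{\mathbb{P}_\rvec{H}} + \log\Bigl(\tfrac{|\Lambda|}{\epsilon}\Bigr).$$

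Finally, the assumption $\eta_1(\lambda) > 0$ makes division allowed without flipping the inequality, and isolating $\mathbb{E}_\mathbb{Q}[\mathcal{R}]$ yields exactly~(\ref{Eq_CorPAC}). Since the whole manipulation is deterministic and was performed inside the event on which the PAC-Bayes inequality holds, the probability $1-\epsilon$ is preserved.

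There is no real obstacle here; the claim is essentially a reformulation that takes advantage of two structural choices, namely the first coordinate of $T$ being the generalization gap and the first coordinate of $\eta$ being a positive scalar. The only point that deserves care is to verify that the arrangement is valid \emph{uniformly} in $\lambda$ and $\mathbb{Q}$, which is automatic because the event in Theorem~\ref{PAPER_PACBayesExponentialFamiliy} is already a simultaneous event over all such pairs, and because positivity of $\eta_1(\lambda)$ is assumed for every $\lambda \in \Lambda$.
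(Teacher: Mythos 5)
Your proposal is correct and matches the paper's intent exactly: the paper states that Corollary~\ref{PAPER_KorGeneralizationBoundExpFamily} ``follows directly by using the properties of the Euclidean scalar product,'' and your splitting of $\sprod{\eta(\lambda)}{T}$ into $\eta_1(\lambda)(\mathcal{R}-\hat{\mathcal{R}}) + \sprod{\eta'(\lambda)}{T'}$, followed by linearity of $\mathbb{E}_\mathbb{Q}$ and division by the positive $\eta_1(\lambda)$ inside the simultaneous event of Theorem~\ref{PAPER_PACBayesExponentialFamiliy}, is precisely that argument. Your remark that uniformity in $\lambda$ and $\mathbb{Q}$ is inherited from the theorem's event is a correct and worthwhile point of care.
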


In Section~\ref{Sec_PropertiesOfAlgorithms} we provide sufficient conditions, such that $\mathbb{E}_{\rvec{D}_N}[c(\lambda)] \le 1$ holds for all $\lambda > 0$.

\subsection{Minimization of the PAC-Bound}\label{SecMinimizationPACBound}
In this whole subsection we use $\eta$ and $T$ from Corollary~\ref{PAPER_KorGeneralizationBoundExpFamily}. We seek for $\lambda \in \Lambda$ and $\mathbb{Q} \ll \mathbb{P}_{\rvec{H}}$ that minimizes the right-hand side of the generalization bound in (\ref{Eq_CorPAC}). By factoring out $-\frac{1}{\eta_1(\lambda)}$ again, this is actually the same as:
\begin{align*}
    \inf_{\lambda \in \Lambda} \ -\frac{1}{\eta_1(\lambda)} \Bigl(\sup_{\mathbb{Q} \ll \mathbb{P}_{\rvec{H}}} 
    \mathbb{E}_\mathbb{Q}[\sprod{\eta(\lambda)}{\Tilde{T}}] 
    - \divergence{KL}{\mathbb{Q}}{\mathbb{P}_\rvec{H}} 
    - \log \Bigl( \frac{\vert \Lambda \vert}{\epsilon} \Bigr)  \Bigl),
\end{align*}
where $\Tilde{T}(\alpha, \rvec{D}_N) := \bigl(-\hat{\mathcal{R}}(\alpha, \rvec{D}_N), \ T'(\alpha, \rvec{D}_N) \bigr)$. Since $\log( \vert \Lambda \vert/\epsilon)$ is a constant, Lemma \ref{Lem_DonskerVaradhanForExponentialFamilies} shows that the term inside the brackets is actually given by $\Tilde{\kappa}(\lambda) - \log( \vert \Lambda \vert / \epsilon)$, where $\Tilde{\kappa}$ corresponds to the exponential family $\mathbb{Q}_\lambda$ built upon $\Tilde{T}, \eta$ and $h \equiv 1$. Furthermore, it shows that the optimal posterior distribution is given by the corresponding member of the exponential family (usually called the Gibbs posterior \citep{A2021}):
$$
    \frac{d\mathbb{Q}_\lambda}{d\mathbb{P}_\rvec{H}}(\alpha) = \frac{\exp(\sprod{\eta(\lambda)}{\Tilde{T}(\alpha)})}{\mathbb{E}_{\rvec{H}}[\exp(\sprod{\eta(\lambda)}{\Tilde{T}})]}\,.
$$
By denoting $F(\lambda) := -\frac{1}{\eta_1(\lambda)}( \Tilde{\kappa}(\lambda) - \log(\vert \Lambda \vert / \epsilon))$, one is left with solving the following minimization problem:
\begin{equation}\label{Eq_MinF}
    \min_{\lambda \in \Lambda} \ F(\lambda),
\end{equation}
which for $\Lambda \subset \mathbb{R}$ is one-dimensional. Under mild assumptions, one can show that $\argmin_{\lambda > 0} F(\lambda)$ lies in a bounded interval. Thus, one can control the accuracy of the solution of the minimization problem (\ref{Eq_MinF}) by the choice of $\Lambda$. The computational cost for evaluating this one-dimensional function several times is low compared to solving several minimization problems during training. 

\section{Learning Optimization Algorithms with Theoretical Convergence Guarantees} \label{Sec_PropertiesOfAlgorithms}
In this section, we consider properties of optimization algorithms, that assert the necessary condition $\mathbb{E}_{\rvec{D}_N}[c(\lambda)] \le 1$ for all $\lambda \in \Lambda$ to employ the PAC-Bayes bound from Section \ref{Sec_PAC}. Typically, this yields the functions $\eta'$ and $T'$.

\subsection{Guaranteed Convergence}\label{Sec_EnsuredConvergence}
The following convergence property is sufficient to ensure the assumptions of Theorem \ref{PAPER_PACBayesExponentialFamiliy}. Essentially, it requires the loss of the algorithm's output to be bounded. Nevertheless, it is shown in Section~\ref{SubSec_CondOnConv} that it is too restrictive to learn hyperparameters that allow for a significant acceleration compared to the standard choices from a worst-case analysis.
\begin{AssumptionGeneral}\label{AssumptionConvergence}
    There is a constant $C \ge 0$ and a measurable function $\rho: \mathcal{H} \longrightarrow \mathbb{R}_{\ge 0}$, such that it holds:
    $$
        \ell(\mathcal{A}(\alpha, \rvec{S}), \rvec{S}) \le 
        C \rho(\alpha) \ell(x^{(0)}, \rvec{S}) \qquad \forall \alpha \in \mathcal{H} \,.
    $$
\end{AssumptionGeneral}

\begin{Anm}
    The basic motivation for Assumption~\ref{AssumptionConvergence} is to take the (possibly known) convergence behaviour of an optimization algorithm into account. 
\end{Anm}

\begin{Satz}\label{Lem_AlgoWithConvergenceProperty}
    Let $N \in \mathbb{N}$ and $\rvec{D}_N$ be an i.i.d. dataset. Assume $\mathcal{A}$ satisfies Assumption \ref{AssumptionConvergence}. Further, assume that $\mathbb{E}_{\rvec{S}}\bigl[ \ell(x^{(0)}, \rvec{S})^2 \bigr] < \infty$. Define $\eta: \mathbb{R}_{>0} \longrightarrow \mathbb{R}^2$ and $T: \mathcal{H} \times \mathcal{D}_N \longrightarrow \mathbb{R}^2$ through:
    \begin{align*}
        \eta(\lambda) := \Bigl( \lambda, \ -\frac{\lambda^2}{2} \frac{C^2}{N}\mathbb{E}_{\rvec{S}}\bigl[ \ell(x^{(0)}, \rvec{S})^2 \bigr] \Bigr)\,, \ \ \  
        T (\alpha, \rvec{D}_N) := (\mathcal{R}(\alpha) - \hat{\mathcal{R}}(\alpha, \rvec{D}_N), \ \rho^2(\alpha) )\,.
    \end{align*}
    Then, for all $\lambda > 0$, it holds that: 
    $$
    \mathbb{E}_{\rvec{D}_N}[c(\lambda)] \le 1 \,.
    $$
\end{Satz}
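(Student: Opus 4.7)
The plan is to unpack the definition of $c(\lambda)$, swap the order of integration using independence of $\rvec{H}$ and $\rvec{D}_N$, and then apply Lemma~\ref{LemmaSubGaussianLowerTail} pointwise in $\alpha$ so that the resulting upper bound exactly cancels the deterministic second coordinate of $\eta(\lambda)$.

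First, fix $\lambda > 0$ and write out, using $h \equiv 1$ and the given $\eta$ and $T$,
\[
c(\lambda) = \int_{\mathcal{H}} \exp\Bigl( \lambda\bigl(\mathcal{R}(\alpha) - \hat{\mathcal{R}}(\alpha, \rvec{D}_N)\bigr) - \tfrac{\lambda^2 C^2 \rho^2(\alpha)}{2N} \mathbb{E}_{\rvec{S}}[\ell(x^{(0)}, \rvec{S})^2] \Bigr) \, \mathbb{P}_\rvec{H}(d\alpha).
\]
Since $\mathbb{P}_\rvec{H}$ is the distribution of $\rvec{H}$, which is independent of $\rvec{D}_N$, and since the integrand is non-negative and measurable, Tonelli/Fubini lets me interchange $\mathbb{E}_{\rvec{D}_N}$ with the integral over $\alpha$. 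The factor involving only $\alpha$ and $\lambda$ pulls out, leaving
\[
\mathbb{E}_{\rvec{D}_N}[c(\lambda)] = \int_{\mathcal{H}} e^{-\frac{\lambda^2 C^2 \rho^2(\alpha)}{2N} \mathbb{E}_{\rvec{S}}[\ell(x^{(0)}, \rvec{S})^2]} \, \mathbb{E}_{\rvec{D}_N}\Bigl[ e^{\lambda(\mathcal{R}(\alpha) - \hat{\mathcal{R}}(\alpha, \rvec{D}_N))} \Bigr] \, \mathbb{P}_\rvec{H}(d\alpha).
\]

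Next, for fixed $\alpha$, set $Y_i := \ell(\mathcal{A}(\alpha, \rvec{S}_i), \rvec{S}_i)$; these are i.i.d.\ with mean $\mathcal{R}(\alpha)$, so $\mathcal{R}(\alpha) - \hat{\mathcal{R}}(\alpha, \rvec{D}_N) = \frac{1}{N}\sum_{i=1}^N(\mathcal{R}(\alpha) - Y_i)$ and the inner expectation factorizes as $\bigl(\mathbb{E}[\exp(-\tfrac{\lambda}{N}(Y_1 - \mathbb{E}[Y_1]))]\bigr)^N$. By Assumption~\ref{AssumptionLossNonnegative} each $Y_i \ge 0$, and by Assumption~\ref{AssumptionConvergence} together with the second-moment hypothesis,
\[
\mathbb{E}[Y_1^2] \le C^2 \rho^2(\alpha) \, \mathbb{E}_{\rvec{S}}[\ell(x^{(0)}, \rvec{S})^2] < \infty.
\]
So Lemma~\ref{LemmaSubGaussianLowerTail} applies with parameter $\lambda/N > 0$, yielding
\[
\mathbb{E}\Bigl[e^{-\frac{\lambda}{N}(Y_1 - \mathbb{E}[Y_1])}\Bigr] \le \exp\Bigl(\tfrac{\lambda^2}{2N^2} \mathbb{E}[Y_1^2]\Bigr),
\]
and raising to the $N$-th power gives $\mathbb{E}_{\rvec{D}_N}[\exp(\lambda(\mathcal{R}(\alpha) - \hat{\mathcal{R}}(\alpha, \rvec{D}_N)))] \le \exp\bigl(\tfrac{\lambda^2 C^2 \rho^2(\alpha)}{2N} \mathbb{E}_{\rvec{S}}[\ell(x^{(0)}, \rvec{S})^2]\bigr)$.

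Finally, plugging this bound back into the expression for $\mathbb{E}_{\rvec{D}_N}[c(\lambda)]$ cancels the deterministic negative exponential exactly, leaving $\int_{\mathcal{H}} 1 \, \mathbb{P}_\rvec{H}(d\alpha) = 1$, as required. The only place that needs real care is checking that the design of $\eta_2(\lambda)$ exactly matches the sub-Gaussian constant coming out of Lemma~\ref{LemmaSubGaussianLowerTail} after the i.i.d.\ product-over-$N$ gives a factor of $1/N$ (rather than $1/N^2$); the rest is bookkeeping (Fubini, independence of $\rvec{H}$ and $\rvec{D}_N$, non-negativity of $\ell$, and the deterministic bound from Assumption~\ref{AssumptionConvergence} on $\mathbb{E}[Y_1^2]$).
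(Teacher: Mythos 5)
Your proposal is correct and follows essentially the same route as the paper's proof: factorize the exponential moment over the i.i.d.\ sample, bound the second moment of the loss via Assumption~\ref{AssumptionConvergence}, apply Lemma~\ref{LemmaSubGaussianLowerTail} with parameter $\lambda/N$, and integrate over $\mathbb{P}_\rvec{H}$ using Fubini and independence so that the second coordinate of $\eta$ cancels the sub-Gaussian bound exactly. The only cosmetic difference is that you interchange $\mathbb{E}_{\rvec{D}_N}$ with the integral over $\alpha$ at the start rather than at the end.
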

The proof of Theorem \ref{Lem_AlgoWithConvergenceProperty} is given in the supplementary material \ref{Proof_Lem_AlgoWithConvergenceProperty}.

\subsection{Conditioning on Convergence}\label{SubSec_CondOnConv}
Most of the time, the previous approach is only able to learn hyperparameters that ensure convergence. When the considered class of functions $(\ell(\cdot, \theta))_{\theta \in \Theta}$ is that of general quadratic functions, the convergence behaviour is accurately represented by analytic quantities from a worst-case analysis. Thereby, Assumption~\ref{AssumptionConvergence} prevents "aggressive" step-size parameters that lie outside the worst-case convergence regime. This is also encoded in Assumption~\ref{AssumptionConvergence}, as C and $\rho$ are independent of $\rvec{S}$. Moreover, it can be difficult to compute them. Hence, in this section, a different approach is taken: We allow for divergence, if it only occurs in rare cases with a controllable quantity. Essentially, one only considers the loss for all the hyperparameters, where convergence occurs, as well as the probability for that. In Section~\ref{SubSec_GuaranteeConvProb}, we develop a technique that allows the user to control this probability. Clearly, a higher convergence guarantee trades for convergence speed. To the best of our knowledge, the following way of dealing with the rare, unwanted case is completely new. 

\begin{Def}
    The \emph{convergence set} is defined as the set-valued mapping $C: \mathcal{H} \rightrightarrows \Theta$ with
    $$
        C(\alpha) := \{ \theta \in \Theta \ \vert \ \ell(\mathcal{A}(\alpha, \theta), \theta) \le \ell(x^{(0)}, \theta)\} \,.
    $$
\end{Def}

\begin{Anm}
    Other definitions of the convergence set are possible and the concrete choice will influence the resulting PAC-bound. For proving the result, the essential property is that the loss after application of $\mathcal{A}$ can be bounded, such that the (conditional) expectation is finite.
\end{Anm}

For every $\alpha \in \mathcal{H}$, the set $C(\alpha)$ is measurable, as the map $\theta \mapsto \ell(\mathcal{A}(\alpha, \theta), \theta) - \ell(x^{(0)}, \theta)$ is measurable. Nevertheless, we have to make the following assumption:
\begin{AssumptionGeneral}\label{Ass_p_Meas}
    The map 
    $
    p: \mathcal{H} \longrightarrow [0,1], \ \ \alpha \mapsto p(\alpha) := \mathbb{P}_\rvec{S}[C(\alpha)]
    $
    is measurable.
\end{AssumptionGeneral}

\begin{Anm}
    Although at first Assumption \ref{Ass_p_Meas} seems very restrictive, it is actually very mild in the sense that in our use case one can always find a measurable version of $p$. For more details see Lemma \ref{Proof_Rem_Meas_p}, whose assumptions are always met in our setting.
\end{Anm}

\noindent
Now we define the convergence risk as the expect loss conditioned on the convergence of the algorithm:
\begin{Def}
    The \emph{convergence risk} is defined as the conditional expectation of the loss given $C(\alpha)$:
    \begin{align*}
        \mathcal{R}_c(\alpha) 
        := \mathbb{E}_\rvec{S}[\ell(\mathcal{A}(\alpha, \rvec{S}), \rvec{S}) \ \vert \ C(\alpha)] 
        = \begin{cases}
            \frac{1}{p(\alpha)} \mathbb{E}_\rvec{S}[\mathds{1}_{C(\alpha)}(\rvec{S}) \ell(\alpha, \rvec{S})], &\text{if $p(\alpha) > 0$} \,; \\
            0, &\text{else} \,.
        \end{cases}
    \end{align*}
    Given a dataset $\rvec{D}_N = (\rvec{S}_1,...,\rvec{S}_N)$, the \emph{empirical convergence risk} is defined as:
    $$
        \hat{\mathcal{R}}_c(\alpha, \rvec{D}_N) := \frac{1}{p(\alpha)} \frac{1}{N} \sum_{i=1}^N \mathds{1}_{C(\alpha)}(\rvec{S}_i) \ell(\alpha, \rvec{S}_i) \,.
    $$
\end{Def}
The following theorem is a generalization of Theorem \ref{Lem_AlgoWithConvergenceProperty}. The proof is given in the supplementary material \ref{Proof_PacBayesConditioned}.

\begin{Satz} \label{Lem_PacBayesConditioned}
    Assume that $\mathbb{P}_\rvec{H}\{ p > 0 \} = 1$ and $\mathbb{E}_\rvec{S}[\ell(x^{(0)}, \rvec{S})^2] < \infty$. Define $\eta: \mathbb{R}_{> 0} \longrightarrow \mathbb{R}^2$ and $T: \mathcal{H} \times D_N \longrightarrow \mathbb{R}^2$ through
    \begin{align*}
        \eta(\lambda) := \Bigl( \lambda, \ -\frac{\lambda^2}{2} \frac{1}{N}\mathbb{E}_\rvec{S}\Bigl[ \ell(x^{(0)}, \rvec{S})^2\Bigr]  \Bigr)\,, \ \ \
        T(\alpha, \rvec{D}_N) := \Bigl( \mathcal{R}_c(\alpha) - \hat{\mathcal{R}}_c(\alpha, \rvec{D}_N), \ \frac{1}{p(\alpha)^2} \Bigr) \,.
    \end{align*}
    Then, for all $\lambda > 0$, it holds that: 
    $$
    \mathbb{E}_{\rvec{D}_N}[c(\lambda)] \le 1 \,.
    $$
\end{Satz}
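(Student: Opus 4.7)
The plan is to unwrap $c(\lambda)$ with $h\equiv 1$, swap the expectations by Fubini/Tonelli (everything is non-negative), exploit the independence of $\rvec{H}$ from $\rvec{D}_N$ so that the inner exponential moment reduces to a dataset-only computation for each fixed $\alpha$, and then control that inner moment through the i.i.d. factorization together with Lemma~\ref{LemmaSubGaussianLowerTail}. Writing the scalar product out,
\begin{equation*}
    \sprod{\eta(\lambda)}{T(\alpha,\rvec{D}_N)} = \lambda\bigl(\mathcal{R}_c(\alpha)-\hat{\mathcal{R}}_c(\alpha,\rvec{D}_N)\bigr) \;-\; \frac{\lambda^2}{2N}\,\frac{\mathbb{E}_\rvec{S}[\ell(x^{(0)},\rvec{S})^2]}{p(\alpha)^2},
\end{equation*}
so after Fubini it suffices to show, for every $\alpha$ with $p(\alpha)>0$ (a set of full $\mathbb{P}_\rvec{H}$-measure by assumption), the one-sided sub-Gaussian inequality
\begin{equation*}
    \mathbb{E}_{\rvec{D}_N}\Bigl[\exp\bigl(\lambda(\mathcal{R}_c(\alpha)-\hat{\mathcal{R}}_c(\alpha,\rvec{D}_N))\bigr)\Bigr] \;\le\; \exp\!\Bigl(\tfrac{\lambda^2}{2N}\,\tfrac{1}{p(\alpha)^2}\,\mathbb{E}_\rvec{S}[\ell(x^{(0)},\rvec{S})^2]\Bigr).
\end{equation*}

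For a fixed such $\alpha$, I would introduce the non-negative random variables $Y_i(\alpha):=\frac{1}{p(\alpha)}\mathds{1}_{C(\alpha)}(\rvec{S}_i)\,\ell(\alpha,\rvec{S}_i)$. By definition of $\mathcal{R}_c$ and $\hat{\mathcal{R}}_c$ we have $\mathbb{E}[Y_i(\alpha)]=\mathcal{R}_c(\alpha)$ and $\hat{\mathcal{R}}_c(\alpha,\rvec{D}_N)=\frac{1}{N}\sum_{i=1}^N Y_i(\alpha)$, hence
\begin{equation*}
    \exp\bigl(\lambda(\mathcal{R}_c-\hat{\mathcal{R}}_c)\bigr) = \prod_{i=1}^N \exp\!\Bigl(-\tfrac{\lambda}{N}\bigl(Y_i(\alpha)-\mathbb{E}[Y_i(\alpha)]\bigr)\Bigr).
\end{equation*}
The i.i.d.\ assumption on $\rvec{D}_N$ lets me factor the expectation over this product. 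Applying Lemma~\ref{LemmaSubGaussianLowerTail} to each non-negative $Y_i(\alpha)$ with parameter $\lambda/N>0$ yields
\begin{equation*}
    \mathbb{E}\Bigl[\exp\!\bigl(-\tfrac{\lambda}{N}(Y_i(\alpha)-\mathbb{E}[Y_i(\alpha)])\bigr)\Bigr] \le \exp\!\Bigl(\tfrac{\lambda^2}{2N^2}\,\mathbb{E}[Y_i(\alpha)^2]\Bigr),
\end{equation*}
and taking the product over $i=1,\dots,N$ collapses the $N^2$ in the denominator to $N$.

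The last ingredient is the second-moment bound. This is the step that uses the very definition of the convergence set: on $C(\alpha)$ we have $\ell(\alpha,\rvec{S})\le \ell(x^{(0)},\rvec{S})$ pointwise, so
\begin{equation*}
    \mathbb{E}[Y_i(\alpha)^2] = \tfrac{1}{p(\alpha)^2}\,\mathbb{E}_\rvec{S}\bigl[\mathds{1}_{C(\alpha)}(\rvec{S})\,\ell(\alpha,\rvec{S})^2\bigr] \le \tfrac{1}{p(\alpha)^2}\,\mathbb{E}_\rvec{S}[\ell(x^{(0)},\rvec{S})^2],
\end{equation*}
which is finite by hypothesis. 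Substituting this bound closes the chain and gives the claimed sub-Gaussian estimate; multiplying by the second coordinate of $T$ (which was designed exactly to absorb the $1/p(\alpha)^2$ factor) produces $\mathbb{E}_{\rvec{D}_N}[c(\lambda)]\le 1$.

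The main conceptual obstacle—and the reason this is a genuine generalization of Theorem~\ref{Lem_AlgoWithConvergenceProperty} rather than a routine restatement—is the second-moment control: Assumption~\ref{AssumptionConvergence} provided a uniform dominating function $C\rho(\alpha)\ell(x^{(0)},\rvec{S})$, whereas here the domination only holds on the random event $C(\alpha)$. The indicator $\mathds{1}_{C(\alpha)}$ restores the inequality $\ell(\alpha,\rvec{S})\le \ell(x^{(0)},\rvec{S})$ where it is needed, while the $1/p(\alpha)$ normalization (which makes $Y_i$ a conditional-expectation estimator) is what forces the $1/p(\alpha)^2$ to appear as the new sufficient statistic $T'$. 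The only technical nuisance is the measurability of $\alpha\mapsto p(\alpha)$ required to apply Fubini, but this is precisely Assumption~\ref{Ass_p_Meas}.
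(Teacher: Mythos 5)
Your proposal is correct and follows essentially the same route as the paper's proof: factor the exponential moment over the i.i.d.\ sample, apply Lemma~\ref{LemmaSubGaussianLowerTail} to the non-negative summands, bound the second moment via the defining inequality of the convergence set, and finish with Fubini over $\mathbb{P}_\rvec{H}$ using $\mathbb{P}_\rvec{H}\{p>0\}=1$. The only cosmetic difference is that you absorb the $1/p(\alpha)$ factor into the random variables $Y_i$ before invoking the lemma with parameter $\lambda/N$, whereas the paper keeps it in the exponent and applies the lemma with parameter $\lambda/(Np(\alpha))$; the two computations are identical.
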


\begin{Anm}
    $\mathbb{P}_\rvec{H}\{ p > 0 \} = 1$ says that, under the prior, the algorithm should not diverge exclusively.
\end{Anm}

\subsection{Guarantee of Convergence with High Probability}\label{SubSec_GuaranteeConvProb}
In the previous approach, care has to be taken in the choice of the prior $\mathbb{P}_\rvec{H}$: Constructing the prior in a way that minimizes the upper bound as much as possible can lead to the case where a high convergence probability is neglected, i.e., the algorithm converges only on a small subset of the parameters and for them especially fast, because the term $\frac{1}{p(\alpha)}$ might not compensate for the smaller convergence risk. Thus, if a certain convergence probability $\epsilon_{conv}$ has to be satisfied, one has to ensure this in another way. We propose to use a direct consequence of absolute continuity:
\begin{Lemma}
    Let $\epsilon_{conv} \in [0,1]$ and $\mathbb{P}_\rvec{H}$ be such that $\mathbb{P}_\rvec{H} \{ p < \epsilon_{conv} \} = 0$. Then it holds for every $\mathbb{Q} \ll \mathbb{P}_{\rvec{H}}$:
    $$
    \mathbb{Q} \{ p < \epsilon_{conv} \} = 0 \,.
    $$
\end{Lemma}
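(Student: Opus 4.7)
The statement is essentially an immediate unpacking of the definition of absolute continuity, so the proof will be short. My plan is to first verify that the set $\{p < \epsilon_{conv}\}$ is measurable, then invoke absolute continuity to transfer the null-set property.

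For measurability, observe that $\{p < \epsilon_{conv}\} = p^{-1}\bigl([0, \epsilon_{conv})\bigr)$. By Assumption~\ref{Ass_p_Meas}, the map $p: \mathcal{H} \longrightarrow [0,1]$ is measurable, and $[0, \epsilon_{conv})$ is Borel in $[0,1]$, so $\{p < \epsilon_{conv}\} \in \mathcal{B}(\mathcal{H})$. This makes it meaningful to evaluate $\mathbb{P}_\rvec{H}$ and $\mathbb{Q}$ on this set.

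By hypothesis, $\mathbb{P}_\rvec{H}\{p < \epsilon_{conv}\} = 0$. Since $\mathbb{Q} \ll \mathbb{P}_\rvec{H}$, the defining property of absolute continuity yields that every $\mathbb{P}_\rvec{H}$-null set is also a $\mathbb{Q}$-null set, so $\mathbb{Q}\{p < \epsilon_{conv}\} = 0$. There is no real obstacle here; the only subtle point worth stating explicitly is the measurability of $\{p < \epsilon_{conv}\}$, which relies precisely on Assumption~\ref{Ass_p_Meas} (this is one of the reasons that assumption was needed in the first place, as remarked after its statement).
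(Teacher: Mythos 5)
Your proof is correct and matches the paper's (implicit) argument exactly: the paper explicitly calls this a ``direct consequence of absolute continuity'' and omits the proof as trivial, and your only addition -- checking measurability of $\{p < \epsilon_{conv}\}$ via Assumption~\ref{Ass_p_Meas} -- is a sound and appropriate bit of bookkeeping. Nothing further is needed.
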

Though the proof is trivial, this lemma has a very important consequence, which we want to emphasize here: If one can guarantee that a required property is satisfied for the prior, it will be preserved during the PAC-Bayes learning process, i.e., if the prior only puts mass on hyperparameters that ensure a certain convergence probability, also the posterior will allow only hyperparameters that ensure the same convergence probability. Thus, ensuring a convergence probability will be part of the construction of the prior. 

\section{Experiments}\label{Sec_Experiments}
In all experiments, we use $n = 50$ and a quadratic loss function for which we can choose the smallest and largest eigenvalue, i.e., a loss of the form $\frac{1}{2}\Vert Ax - b \Vert^2$. As optimization algorithms we unroll either the Heavy-ball (\ref{Eq_UpdateHB}) or Gradient Descent update step for a fixed number of iterations. In the case of Gradient Descent we learn the (constant) step-size, and in the case of heavy-ball we learn the step-size and the extrapolation parameter (both constant). Note that all results are created with a single sample from the posterior and do not show the expected value under the posterior. The experiments are a proof-of-concept for our theory in an easily controllable setting. Actually, our theory does not require a quadratic, in fact not even convex, loss function. More details about the learning procedure are given in Appendix~\ref{Sec_DescriptionLearningProcedure}.

\begin{figure}[t!]
\centering
\tikz{
    \node (A) {\includegraphics{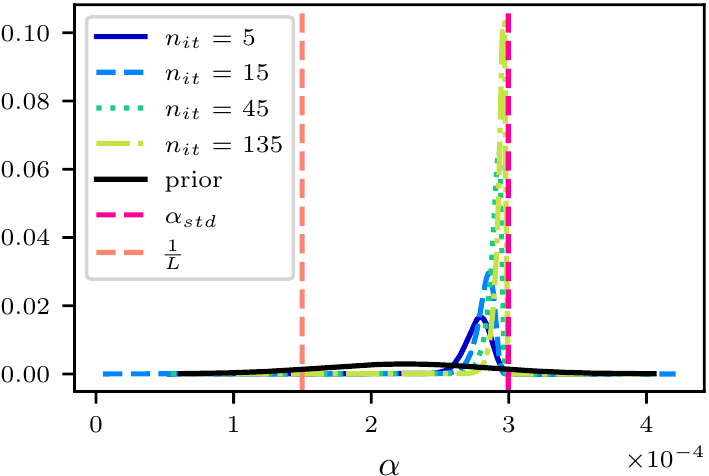}};
    \node[below right=-5.08cm and 5mm of A] (B) {\includegraphics{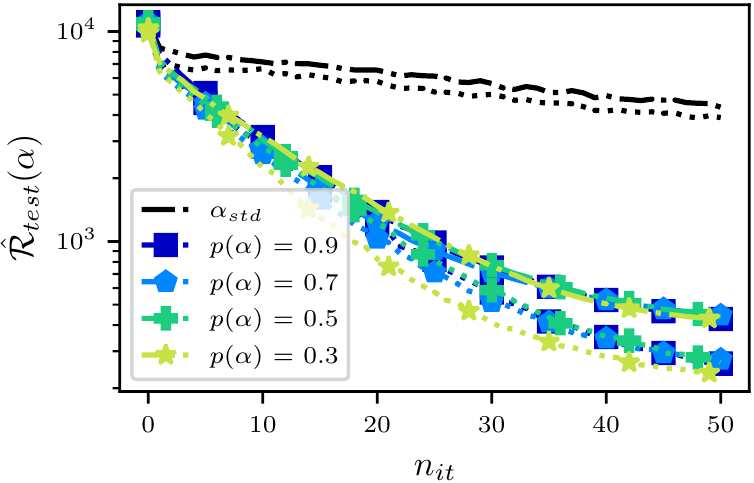}};
    \draw (A.south) node[below=3mm, text width=0.45\linewidth, align=justify] {\caption{\label{fig:PosteriorConverges}Posterior for an increasing number of iterations: The initial prior is chosen as a Gaussian centered at $\frac{1}{2}(\frac{1}{L} + \frac{2}{L})$. The posterior distributions for $N_{it} \in \{5, 15, 45, 135\}$ are shown. For an increasing number of iterations the posterior puts increasingly more mass close to $\alpha_{std} = \frac{2}{L+\mu}$.}};
    \draw (B.south) node[below=2.1mm, text width=0.45\linewidth, align=justify] {\caption{\label{fig:CondOnConv}Test loss over the iterations: The black lines are for the standard choices of the hyperparameters. The empirical mean is given by the dashed line, and the median by the dotted one. The other lines show the test loss for $p(\alpha) \in \{0.9, 0.7, 0.5, 0.3\}$. By excluding the worst-case, one can accelerate the optimization procedure significantly.}};
}
\end{figure}

\subsection{Convergence of the Posterior}
The first experiment considers the posterior distribution over the step-size parameter of Gradient Descent. The parameter $\rvec{S}$ is given by the right-hand side $b$ of the quadratic problem, i.e., all problems have the same strong convexity parameter $\mu$ and the same smoothness parameter $L$ (smallest and largest eigenvalue of $A^T A$). We use $N_{train} = 200$ and build the exponential family with $\eta$ and $T$ from Section~\ref{Sec_EnsuredConvergence}, i.e., convergence is guaranteed during learning. We are interested in how the posterior distribution evolves for an increasing number of iterations of the algorithm. Since it is known that $\alpha_{std} = \frac{2}{L+\mu}$ yields the optimal rate in the worst-case \citep{N2018}, one would expect that the posterior puts increasingly more mass onto step-sizes close to $\alpha_{std}$. Figure~\ref{fig:PosteriorConverges} confirms this intuition. Also, it shows that Assumption~\ref{AssumptionConvergence} prohibits step-sizes larger than $\frac{2}{L}$, which could lead to divergence easily. 

\subsection{Conditioning on Convergence}\label{ExpCondOnConv}
Here, the parameters $\rvec{S}$ of $\ell$ are given by the quadratic matrix and the right-hand side, i.e., the problems have a differing strong convexity parameter $\mu$ and smoothness parameter $L$. We sample these from a uniform distribution over $[\mu_-, \mu_+]$ and $[L_-, L_+]$. This simulates a situation where these parameters can only be estimated roughly. We use the Heavy-ball method for 50 iterations. The standard choice for the hyperparameters are given by $\tau_{std} = \Bigl(\frac{2}{\sqrt{L_+} + \sqrt{\mu_-}}\Big)^2$ and $\beta_{std} = \Bigl( \frac{\sqrt{L_+} - \sqrt{\mu_-}}{\sqrt{L_+} + \sqrt{\mu_-}}\Bigr)^2
$ \citep{N2018}. We use $N_{prior} = 100$, $N_{train} = 100$ and $N_{test} = 200$. Figure~\ref{fig:CondOnConv} shows the convergence behaviour for different convergence guarantees. As one can see, excluding the worst-case ($\epsilon_{conv} \ge 0.9$) leads to a significantly better convergence result. However, a further decrease of the convergence guarantee does not lead to a further acceleration. This does not match the expected behaviour, yet is explained by the next experiment. 

\begin{figure}[t!]
\centering
\tikz{
    \node (A) {\includegraphics{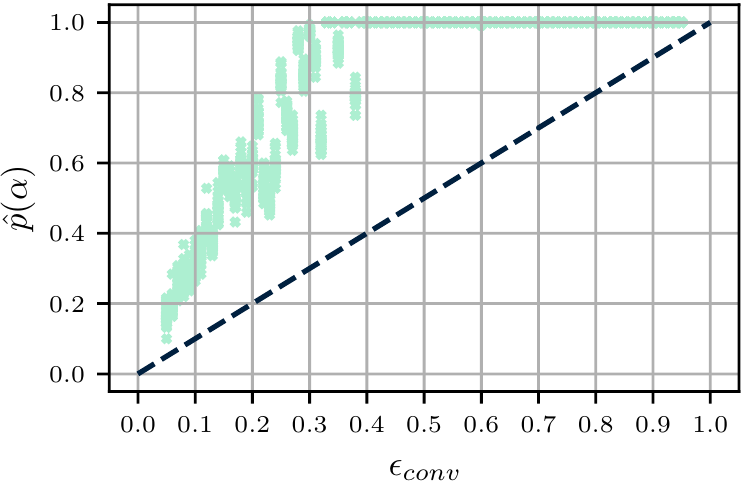}};
    \node[below right=-5.15cm and 3mm of A] (B) {\includegraphics{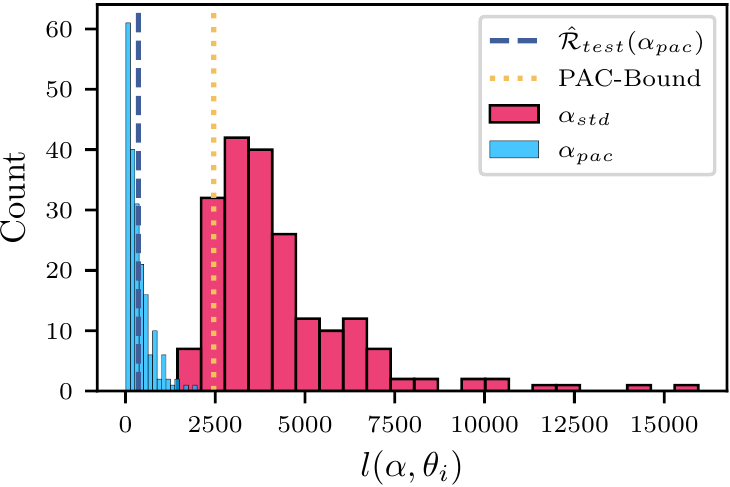}};
    \draw (A.south) node[below=3mm, text width=0.45\linewidth, align=justify] {\caption{\label{fig:EmpConvProb}Empirical convergence probability: The dashed diagonal indicates the user-specified convergence probability. Each cross represents the empirical convergence probability on a separate test set. All empirical convergence probabilities lie well above the diagonal, i.e., the algorithm indeed ensures the user-specified convergence probability.}};
    \draw (B.south) node[below=2.5mm, text width=0.45\linewidth, align=justify] {\caption{\label{fig:PAC_bound}Test loss as histogram: The blue thin bars represent the learned hyperparameters and the red thick bars the ones from a worst-case analysis.  The vertical lines represent the empirical mean for the learned hyperparameters (blue dashed) and the corresponding PAC-bound (orange dotted). The learned hyperparameters clearly outperform the standard ones, yet the PAC-bound is not perfectly tight.}};
}
\end{figure}

\subsection{Ensuring a Certain Convergence Probability}
We use the same setup as in Section~\ref{ExpCondOnConv} and investigate the empirical convergence probability on several test sets. We use $N_{prior} = 100$, $N_{train} = 100$ and $25$ test sets of size $N_{test} = 250$ per user-specified convergence probability to estimate the true convergence probability of the algorithm. Note that we use the same datasets for all different convergence probabilities, i.e., we create them beforehand. We use the standard estimator for binomial distributions as empirical estimate for the convergence probability, i.e., $\hat{p}(\alpha) = \frac{N_{conv}}{N_{test}}$. Figure~\ref{fig:EmpConvProb} shows the result of this experiment: All empirical convergence probabilities lie well above the diagonal, i.e., the algorithm indeed ensures the user-specified convergence probability. However, one can also see that it clearly favors a higher convergence probability than necessary, which can hinder the performance and explains the somewhat unexpected behaviour in the previous experiment. As indicated by the theory, this behaviour is probably due to our construction of the prior distribution.

\subsection{Evaluation of the PAC-Bound}
This experiment looks at the tightness of the PAC-bound. We adopt the setting from Section~\ref{ExpCondOnConv}. Based on the previous experiment, we choose $\epsilon_{conv} = 0.9$ as convergence guarantee. Further, we use $N_{prior} = 200$, $N_{train} = 1000$ and $N_{test} = 200$. The training dataset is chosen larger than before, since the PAC-bound is not yet very tight for small datasets. Figure~\ref{fig:PAC_bound} shows the resulting losses on the test set as histogram plot, as well as the empirical mean and the PAC-Bayes bound. One can clearly see the improved performance of Heavy-ball with the learned hyperparameters. Further, one can see that the PAC-bound is not perfectly tight, however provides a good estimate of the true mean. 

\section{Conclusion}
We presented a general PAC-Bayes theorem based on exponential families, which allows for a unified implementation of the learning framework. We applied this framework to the setting of learning-to-optimize and derived properties, under which the theorem is applicable to a given algorithm. Further, we provided a principled way to exclude unwanted cases by using conditional expectations and showed how to preserve necessary properties during learning. We believe that both approaches can be of great interest even aside the setting of learning-to-optimize. Finally, we provided a proof-of-concept of our theory on several experiments. 
\paragraph{Limitations.} 
We mainly see four limitations of our work: First, a theoretical guarantee to find the global minimum in (\ref{Eq_MinF}) is still missing. Second, the construction of the prior is difficult and time-consuming. Third, we expect similar scaling problems for high-dimensional hyperparameters as with other probabilistic methods. And fourth, the trade-off between convergence speed and convergence probability is partly rather conservative. These problems are very related and will be addressed in future work. 

\subsubsection*{Acknowledgements}
We acknowledge funding by the German Research Foundation under Germany’s Excellence Strategy – EXC number 2064/1 – 390727645 and the project DFG OC 150/5-1. Furthermore, we thank J. Fadili and one of the anonymous reviewers for an important hint for the extension of the union bound argument to compact
sets.

\newpage
\appendix

\section{Missing Proofs}\label{Sec_MissingProofs}

\subsection{Proof of Lemma \ref{Lem_DonskerVaradhanForExponentialFamilies}} \label{Proof_Lem_DonskerVaradhanForExponentialFamilies}
\begin{proof}
Recall that $\kappa(\lambda) = \log \Bigl( \int  h \exp\Bigl(\sprod{\eta(\lambda)}{T}\Bigr) d \mathbb{P}_\rvec{H} \Bigr)$ and $A(\lambda) = c(\lambda)^{-1}$. We have to show: 
\begin{itemize}
    \item[1)] $\kappa(\lambda) = \sup_{\mathbb{Q} \ll \mathbb{P}_\rvec{H}} \mathbb{E}_\mathbb{Q}\bigl[ \sprod{\eta(\lambda)}{T} + \log(h) \bigr] - \divergence{KL}{\mathbb{Q}}{\mathbb{P}_\rvec{H}}$.
    \item[2)] The supremum is attained at $\mathbb{Q}_\lambda$.
\end{itemize} 
For this, we first show $\kappa(\lambda) \ge \mathbb{E}_\mathbb{Q}\bigl[ \sprod{\eta(\lambda)}{T} + \log(h) \bigr] - \divergence{KL}{\mathbb{Q}}{\mathbb{P}_\rvec{H}}$ for an arbitrary $\mathbb{Q} \ll \mathbb{P}_\rvec{H}$ and then equality for $\mathbb{Q}_\lambda$.
Thus, let $\mathbb{Q} \ll \mathbb{P}_\rvec{H}$ and denote by $\frac{d \mathbb{Q}}{d \mathbb{P}_\rvec{H}}$ its Radon-Nikodym derivative w.r.t. $\mathbb{P}_\rvec{H}$. Then it holds:
\begin{align*}
    \mathbb{E}_\mathbb{Q}\Bigl[ \sprod{\eta(\lambda)}{T} + \log(h) \Bigr] - \divergence{KL}{\mathbb{Q}}{\mathbb{P}_\rvec{H}}
    &= \int \sprod{\eta(\lambda)}{T} + \log(h) - \log\Bigl(\frac{d \mathbb{Q}}{d \mathbb{P}_\rvec{H}}\Bigr) \ d\mathbb{Q} \\
    &= \int \log \Bigl( \frac{h}{\frac{d \mathbb{Q}}{d \mathbb{P}_\rvec{H}}} \exp\Bigl(\sprod{\eta(\lambda)}{T}\Bigr) \Bigr) \ d\mathbb{Q} \,.
    \intertext{Since the logarithm is concave, by Jensen's inequality this can be bounded by:}
    &\le \log \Bigl( 
    \int  \frac{h}{\frac{d \mathbb{Q}}{d \mathbb{P}_\rvec{H}}} \exp\Bigl(\sprod{\eta(\lambda)}{T}\Bigr) \ d\mathbb{Q} 
    \Bigr) \\
    &= \log \Bigl( 
    \int  h \exp\Bigl(\sprod{\eta(\lambda)}{T}\Bigr) \ d\mathbb{P}_\rvec{H} 
    \Bigr) \\
    &= \kappa(\lambda) \,.
\end{align*}
It remains to show the equality for $\mathbb{Q}_\lambda$:
\begin{align*}
    \divergence{KL}{\mathbb{Q}_\lambda}{\mathbb{P}_\rvec{H}} 
    &= \int \log \Bigl( \frac{d\mathbb{Q}_\lambda}{d \mathbb{P}_\rvec{H}} \Bigr) \ d \mathbb{Q}_\lambda \\
    &= \int \log \Bigl( h A(\lambda) \exp(\sprod{\eta(\lambda)}{T}) \Bigr) \ d \mathbb{Q}_\lambda \\
    &= \int \log (h) + \sprod{\eta(\lambda)}{T} \ d \mathbb{Q}_\lambda + \log(A(\lambda))\\
    &= \mathbb{E}_{\mathbb{Q}_\lambda}\Bigl[\log (h) + \sprod{\eta(\lambda)}{T}\Bigr] - \log(c(\lambda)) \\
    &= \mathbb{E}_{\mathbb{Q}_\lambda}\Bigl[\log (h) + \sprod{\eta(\lambda)}{T}\Bigr] - \kappa(\lambda),
\end{align*}
which yields:
$$
\kappa(\lambda) = \mathbb{E}_{\mathbb{Q}_\lambda}\Bigl[\log (h) + \sprod{\eta(\lambda)}{T}\Bigr] - \divergence{KL}{\mathbb{Q}_\lambda}{\mathbb{P}_\rvec{H}} \,.
$$
Thus, the supremum is attained at $\mathbb{Q}_\lambda$. This concludes the proof.
\end{proof}

\subsection{Proof of Theorem \ref{PAPER_PACBayesExponentialFamiliy}} \label{Proof_PAPER_PACBayesExponentialFamiliy}
\begin{proof}
We will use $c(\lambda)$ and $\kappa(\lambda)$ as a short-hand for $c(\lambda, \rvec{D}_N)$ and $\kappa(\lambda, \rvec{D}_N)$ respectively. $c(\lambda)$ is a non-negative random variable and $\exp$ is a monotonically increasing function. Thus, since $\mathbb{E}_{\rvec{D}_N}[c(\lambda)] \le 1$ for all $\lambda \in \Lambda$, one gets for every $\lambda \in \Lambda$ from Markov's inequality for every $s \in \mathbb{R}$:
$$
    \mathbb{P}_{\rvec{D}_N} \Bigl\{ c(\lambda) > \exp(s) \Bigr\} \le \frac{\mathbb{E}_{\rvec{D}_N}[c(\lambda)]}{\exp(s)} \le \exp(-s) \,.
$$
Since $c(\lambda) > \exp(s) \ \Leftrightarrow \ \kappa(\lambda) = \log(c(\lambda)) > s$, this is the same as:
$$
    \mathbb{P}_{\rvec{D}_N} \Bigl\{\kappa(\lambda) > s \Bigr\} \le \exp(-s) \,.
$$
This implies by the union-bound argument, that:
$$
    \mathbb{P}_{\rvec{D}_N} \Bigl\{\sup_{\lambda \in \Lambda} \ \kappa(\lambda) > s \Bigr\} 
    = 
    \mathbb{P}_{\rvec{D}_N} \Bigl\{ \bigcup_{\lambda \in \Lambda} \{ \kappa(\lambda) > s \} \Bigr \} 
    \le 
    \sum_{\lambda \in \Lambda} \mathbb{P}_{\rvec{D}_N} \Bigl\{ \kappa(\lambda) > s  \Bigr \} 
    \le \vert \Lambda \vert \exp(-s) \,.
$$
Inserting $s = \log\Bigl( \frac{\vert \Lambda \vert}{\epsilon} \Bigr)$ gives:
$$
    \mathbb{P}_{\rvec{D}_N} \Bigl\{\sup_{\lambda \in \Lambda} \ \kappa(\lambda) > \log\Bigl( \frac{\vert \Lambda \vert}{\epsilon} \Bigr) \Bigr\} 
    \le 
    \epsilon \,.
$$
Hence, the complementary event yields:
$$
    \mathbb{P}_{\rvec{D}_N} \Bigl\{ \sup_{\lambda \in \Lambda} \kappa(\lambda) \le \log\Bigl( \frac{\vert \Lambda \vert }{\epsilon} \Bigr) \Bigr\} \ge 1-\epsilon \,.
$$
Using $\kappa(\lambda) = \sup_{\mathbb{Q} \ll \mathbb{P}_\rvec{H}} \mathbb{E}_\mathbb{Q}\bigl[ \sprod{\eta(\lambda)}{T} + \log(h) \bigr] - \divergence{KL}{\mathbb{Q}}{\mathbb{P}_\rvec{H}}$ then gives:
$$
    \mathbb{P}_{\rvec{D}_N} \Bigl\{ \sup_{\lambda \in \Lambda} \  \sup_{\mathbb{Q} \ll \mathbb{P}_\rvec{H}} \mathbb{E}_\mathbb{Q}[\log(h) + \sprod{\eta(\lambda)}{T}] - \divergence{KL}{\mathbb{Q}}{\mathbb{P}_\rvec{H}} \le \log\Bigl( \frac{\vert \Lambda \vert}{\epsilon} \Bigr) \Bigr\} \ge 1-\epsilon \,.
$$
Rearranging and reformulating then yields the result:
$$
    \mathbb{P}_{\rvec{D}_N} \Bigl\{ \forall \lambda \in \Lambda, \ \forall \mathbb{Q} \ll \mathbb{P}_\rvec{H} \ : \ \mathbb{E}_\mathbb{Q}[\log(h) + \sprod{\eta(\lambda)}{T}] \le  \divergence{KL}{\mathbb{Q}}{\mathbb{P}_\rvec{H}} + \log \Bigl( \frac{\vert \Lambda \vert}{\epsilon} \Bigr) \Bigr\} \ge 1-\epsilon \,.
$$
\end{proof}

\subsection{Proof of Theorem \ref{Lem_AlgoWithConvergenceProperty}}\label{Proof_Lem_AlgoWithConvergenceProperty}
\begin{proof}
We use the following short-hand notation:
$$
    \rvec{L}(\alpha) := \ell(\mathcal{A}(\alpha, \rvec{S}), \rvec{S}), \ \ 
    \rvec{L}_i(\alpha) := \ell(\mathcal{A}(\alpha, \rvec{S}_i), \rvec{S}_i),
    \ \ 
    \rvec{L}_0 := \ell(x^{(0)}, \rvec{S}) \,.
$$
By the i.i.d. assumption, one can write for every fixed $\alpha \in \mathcal{H}$:
\begin{align*}
    &\mathbb{E}_{\rvec{D}_N} \Bigl[ \exp\Bigl( \lambda (\mathcal{R}(\alpha) - \hat{\mathcal{R}}(\alpha, \rvec{D}_N)) \Bigr)\Bigr] 
    = \mathbb{E}_{\rvec{D}_N} \Bigl[ \exp\Bigl( 
     -\frac{\lambda}{N} \sum_{i=1}^N \Bigl(\rvec{L}_i - \mathbb{E}_{\rvec{S}}[\rvec{L} ]  \Bigr) \Bigr)\Bigr] \\
    &= \mathbb{E}_{\rvec{D}_N}\Bigl[ \prod_{i=1}^N \exp\Bigl( -\frac{\lambda}{N} (\rvec{L}_i - \mathbb{E}_\rvec{S}[\rvec{L}]) \Bigr)\Bigr]
     \overset{iid}{=}\prod_{i=1}^N \mathbb{E}_{\rvec{S}} 
     \Bigl[ \exp\Bigl( 
     -\frac{\lambda}{N} \Bigl(\rvec{L} - \mathbb{E}_{\rvec{S}}[\rvec{L}] \Bigr)\Bigr)\Bigr] \,.
     \intertext{Since the loss-function is non-negative and $\mathcal{A}$ satisfies the convergence property, one gets that $\rvec{L}$ is a non-negative random variable with finite second-moment:
     \begin{align*}
         \mathbb{E}_{\rvec{S}}[\rvec{L}^2] &= 
        \mathbb{E}_{\rvec{S}}\Bigl[ \ell\bigl(\mathcal{A}(\alpha, \rvec{S}), \rvec{S} \bigr)^2\Bigr] \\
        &\le C^2 \rho(\alpha)^2 \mathbb{E}_{\rvec{S}}\Bigl[\ell\bigl(x^{(0)}, \rvec{S} \bigr)^2\Bigr]
        = C^2 \rho(\alpha)^2 \mathbb{E}_{\rvec{S}}[\rvec{L}_0^2] \,.
     \end{align*}
     Thus, by lemma \ref{LemmaSubGaussianLowerTail}, one gets the following bound:}
     &\le \prod_{i=1}^N  
     \exp\Bigl( 
     \frac{\lambda^2}{2N^2} \mathbb{E}_{\rvec{S}} [ \rvec{L}^2 ] \Bigr) 
     = \exp\Bigl( 
     \frac{\lambda^2}{2N} \mathbb{E}_{\rvec{S}} [\rvec{L}^2] \Bigr) \,.
     \intertext{Since the exponential function is monotonically increasing, by the convergence property this can again be bounded by:}
     &\le \exp\Bigl( 
     \frac{\lambda^2}{2N} C^2 \rho(\alpha)^2 \mathbb{E}_{\rvec{S}} [\rvec{L}_0^2] \Bigr) \,.
\end{align*}
Thus, for any $\alpha \in \mathcal{H}$ one arrives at the following inequality:
$$
\mathbb{E}_{\rvec{D}_N} \Bigl[ \exp\Bigl( \lambda (\mathcal{R}(\alpha) - \hat{\mathcal{R}}(\alpha, \rvec{D}_N)) \Bigr)\Bigr] \le 
\exp\Bigl( \frac{\lambda^2}{2N} C^2 \rho(\alpha)^2 \mathbb{E}_{\rvec{S}} [\rvec{L}_0^2] \Bigr) \,.
$$
Since the right-hand side is a constant w.r.t. $\mathbb{P}_{\rvec{D}_N}$, rearranging terms gives:
$$
\mathbb{E}_{\rvec{D}_N} \Bigl[ \exp\Bigl( \lambda (\mathcal{R}(\alpha) - \hat{\mathcal{R}}(\alpha, \rvec{D}_N)) - \frac{\lambda^2}{2} \frac{C^2}{N} \rho(\alpha)^2 \mathbb{E}_{\rvec{S}} [\rvec{L}_0^2] \Bigr) \Bigr] 
\le 1 \,.
$$
By integrating both sides with respect to $\mathbb{P}_\rvec{H}$ and using Fubini's theorem (note that this is possible, since $\mathbb{P}_\rvec{H}$ is independent of $\rvec{D}_N$), one gets:
$$
\mathbb{E}_{\rvec{D}_N} \Bigl[ 
\mathbb{E}_\rvec{H} \Bigl[ \exp\Bigl( \lambda (\mathcal{R}(\rvec{H}) - \hat{\mathcal{R}}(\rvec{H}, \rvec{D}_N)) \Bigr) - \frac{\lambda^2}{2} \frac{C^2}{N} \rho(\rvec{H})^2 \mathbb{E}_{\rvec{S}} [\rvec{L}_0^2] \Bigr] \Bigr]
\le 1 \,.
$$
Inserting the definition of $\eta$ and $T$ now gives:
$$
\mathbb{E}_{\rvec{D}_N} \Bigl[ 
\mathbb{E}_\rvec{H} \Bigl[ \exp\Bigl( \sprod{\eta(\lambda)}{T(\rvec{H}, \rvec{D}_N)} \Bigr) \Bigr] \Bigr]
\le 1 \,.
$$
By definition of the Laplace transform, this is the same as:
$$
\mathbb{E}_{\rvec{D}_N}[c(\lambda, \rvec{D}_N)] \le 1 \,.
$$
\end{proof}

\subsection{Proof of Theorem \ref{Lem_PacBayesConditioned}}\label{Proof_PacBayesConditioned}
\begin{proof}
The proof is very similar to the proof of lemma 5.1 and basically follows the same line of argumentation. We use $\ell_c(\alpha, \theta) := \mathds{1}_{C(\alpha)}(\theta) \ell (\alpha, \theta)$ as short-hand and call this the convergence loss. First, consider $\alpha \in \mathcal{H}$ fixed with $p(\alpha) > 0$. Then it holds:
\begin{align*}
    &\mathbb{E}_{\rvec{D}_N} \Bigl[ \exp(\lambda(\mathcal{R}_c(\alpha) - \hat{\mathcal{R}}_c(\alpha, \rvec{D}_N))) \Bigr] 
    = \mathbb{E}_{\rvec{D}_N} \Bigl[ \exp\Bigl(-\frac{\lambda}{N p(\alpha)} \sum_{i=1}^N \Bigl(\ell_c(\alpha, \rvec{S}_i) - \mathbb{E}_\rvec{S}[\ell_c(\alpha, \rvec{S})] \Bigr)\Bigr) \Bigr] \\
    &= \mathbb{E}_{\rvec{D}_N} \Bigl[ \prod_{i=1}^N \exp\Bigl(-\frac{\lambda}{N p(\alpha)} \Bigl( \ell_c(\alpha, \rvec{S}_i) - \mathbb{E}_\rvec{S}[\ell_c(\alpha, \rvec{S})] \Bigr)\Bigr)\Bigr] \,.
    \intertext{Since $\rvec{D}_N$ is assumed to be i.i.d., this is the same as:}
    &= \prod_{i=1}^N \mathbb{E}_{\rvec{S}} \Bigl[  \exp\Bigl(-\frac{\lambda}{N p(\alpha)} \Bigl( \ell_c(\alpha, \rvec{S}) - \mathbb{E}_\rvec{S}[\ell_c(\alpha, \rvec{S})] \Bigr)\Bigr) \Bigr] \,.
    \intertext{Since the convergence loss is non-negative and has a finite second-moment (since $\mathbb{E}_\rvec{S}[\ell_c(\alpha, \rvec{S})^2] \le \mathbb{E}_\rvec{S}[\ell(x^{(0)}, \rvec{S})^2] < \infty$), by lemma 3.14 this can be bounded by:}
    &\le \prod_{i=1}^N \exp\Bigl(\frac{\lambda^2}{2 N^2 p(\alpha)^2} \mathbb{E}_\rvec{S}\Bigl[\ell_c(\alpha, \rvec{S})^2\Bigr]\Bigr) 
    = \exp\Bigl(\frac{\lambda^2}{2 N p(\alpha)^2} \mathbb{E}_\rvec{S}\Bigl[\ell_c(\alpha, \rvec{S})^2\Bigr]\Bigr) \,.
    \intertext{By definition of the convergence set, this can in turn be bounded by:}
    &\le \exp\Bigl(\frac{\lambda^2}{2 N p(\alpha)^2} \mathbb{E}_\rvec{S}\Bigl[\mathds{1}_{C(\rvec{S})} \ell(x^{(0)}, \rvec{S})^2\Bigr]\Bigr)  
    \le \exp\Bigl(\frac{\lambda^2}{2 N p(\alpha)^2} \mathbb{E}_\rvec{S}\Bigl[\ell(x^{(0)}, \rvec{S})^2\Bigr]\Bigr) \,.
\end{align*}
Thus, one gets $\mathbb{P}_\rvec{H}$-a.s.:
$$
\mathbb{E}_{\rvec{D}_N} \Bigl[ \exp(\lambda(\mathcal{R}_c(\alpha) - \hat{\mathcal{R}}_c(\alpha, \rvec{D}_N))) \Bigr] \le \exp\Bigl(\frac{\lambda^2}{2 N p(\alpha)^2} \mathbb{E}_\rvec{S}\Bigl[\ell(x^{(0)}, \rvec{S})^2\Bigr]\Bigr) \,.
$$
Since the right-hand side is independent of $\rvec{D}_N$, this is equivalent to:
$$
\mathbb{E}_{\rvec{D}_N} \Bigl[ \exp\Bigl(\lambda(\mathcal{R}_c(\alpha) - \hat{\mathcal{R}}_c(\alpha, \rvec{D}_N)) - \frac{\lambda^2}{2 N p(\alpha)^2} \mathbb{E}_\rvec{S}\Bigl[\ell(x^{(0)}, \rvec{S})^2\Bigr] \Bigr) \Bigr] \le 1 \,.
$$
Since $\mathbb{P}_\rvec{H} \Bigl\{ p(\rvec{H}) > 0 \Bigr \} = 1$, one can integrate both sides w.r.t. $\mathbb{P}_\rvec{H}$. Furthermore, since $\mathbb{P}_\rvec{H}$ is independent of $\rvec{D}_N$, one can use Fubini's theorem to get:
$$
    \mathbb{E}_{\rvec{D}_N} \Bigl[ \mathbb{E}_{\rvec{H}} \Bigl[ 
    \exp\Bigl(\lambda(\mathcal{R}_c(\rvec{H}) - \hat{\mathcal{R}}_c(\rvec{H}, \rvec{D}_N)) - \frac{\lambda^2}{2 N p(\alpha)^2} \mathbb{E}_\rvec{S}\Bigl[\ell(x^{(0)}, \rvec{S})^2\Bigr] \Bigr) \Bigr] \Bigr] \le 1 \,.
$$
Using the definition of $\eta$ and $T$, this is the same as:
$$
    \mathbb{E}_{\rvec{D}_N} \Bigl[ \mathbb{E}_{\rvec{H}} \Bigl[ \exp(\sprod{\eta(\lambda)}{T(\rvec{H}, \rvec{D}_N)}) \Bigr] \Bigr] \le 1,
$$
which is equivalent to:
$$
    \mathbb{E}_{\rvec{D}_N} \Bigl[ c(\lambda, \rvec{D}_N) \Bigr] \le 1 \,.
$$
\end{proof}

\section{Description of the Learning Procedure}\label{Sec_DescriptionLearningProcedure}
In this section we provide further details about the implementation.

\subsection{General Setup}
We use $n = 50$ as dimension of the optimization problem, 50 iterations of the algorithm, $x^{(0)} = 0 \in \mathbb{R}^n$ as initialization and $\epsilon = 0.01$ as threshold in the PAC-bound. For every implementation of the parametric optimization algorithm $\mathcal{A}$, we specify all (learnable) hyperparameters in a named dictionary, such that we can match all involved quantities like corresponding priors during learning by their unique names. Furthermore, since we perform first-order gradient-based optimization, we implement every algorithm in the form $\mathcal{A}(x^{(0)}, \theta, \nabla_x \ell, \alpha, n_{it})$, where $\nabla_x \ell$ denotes the gradient of $\ell$ w.r.t. $x$ as function of $\theta$. Through this, the following learning procedure can be applied to all tested algorithms in the same way.

\subsection{Creation of the Parametric Problems}
\paragraph{Fixed strong convexity and smoothness parameters.} We create the matrix $A \in \mathbb{R}^{n \times n}$ randomly, where every entry is created by sampling an integer in $\{-10, ..., 10\}$ uniformly at random and then adding noise from a standard normal distribution. This matrix is fixed across the different instances of the problem, such that all problems have the same strong convexity and smoothness parameter. For the right-hand side $b \in \mathbb{R}^n$, which in this case is the only parameter of the parametric optimization problem, we first create a mean $m$ and a covariance matrix $\Sigma$ by sampling every entry uniformly at random in $\{-5, ..., 5\}$ (and updating $\Sigma \leftarrow \Sigma^T \Sigma$ to make it positive definite), and then we sample $N_{prior} + N_{train} + N_{test}$ right-hand sides from the multivariate normal distribution $\mathcal{N}(m, \Sigma)$.

\paragraph{Varying strong convexity and smoothness parameters.}
The creation of the right-hand sides is the same as in the previous paragraph. Thus, we will only describe the creation of the matrices $A$, which define the strong convexity parameter $\mu$ and smoothness parameter $L$. First of all, we restrict to a diagonal matrix. Further, since we found the strong convexity parameter $\mu$ to have only a negligible influence in previous experiments (if the problem is not generally well-conditioned, in which case one would not have to learn anything), we fix it (typically $\mu = 0.05$) and only vary the smoothness parameter $L$. First, we sample $N_{prior} + N_{train} + N_{test}$ smoothness parameters uniformly at random in $[1, 5000]$. Then, for each smoothness parameter we create the matrix $A$ by linearly interpolating between $\sqrt{\mu}$ and $\sqrt{L}$ and inserting these elements (randomly permutated) into the diagonal of $A$. 

\subsection{Learning Procedure}
At first, we setup the sufficient statistics $T$ and the natural parameters $\eta$ as functions in $\rvec{\alpha}$ and $\lambda$, which can be called during training. We hand these, together with the specified priors, over to the general implementation of the learning procedure, which performs the following steps: 
\begin{itemize}
    \item[i)] First, we create samples from the initial prior (depending on the experiment between 50 and 500).
    \item[ii)] Then we evaluate the sufficient statistics T on these samples and find $\argmin_{\lambda \in \Lambda} F(\lambda)$ by a simple grid search. \textit{For this we use a linear grid $\Lambda$ over $(0,1]$ with 25000 entries (note that this corresponds to $\log(\vert \Lambda \vert) \approx 10$ and has, compared to solving the minimization problems during learning, a negligible computational cost).} Note that this also directly yields the PAC-bound.
    \item[iii)] Then, we calculate the posterior density on these samples through the formula for the Gibbs posterior, i.e., if $f$ denotes the density of the prior (w.r.t. the Lebesgue measure), we calculate $f(\alpha_i) \frac{\exp(\sprod{\eta(\lambda)}{T(\alpha_i)})}{\mathbb{E}_{\rvec{H}}[\exp(\sprod{\eta(\lambda)}{T})]}$ for every sample $\alpha_i$. Here we use the empirical mean as approximation for the integral. 
    \item[iv)] Finally, we normalize the resulting values, such that we have a distribution over these samples.
\end{itemize}
Through this, we effectively build a discrete distribution. For visualization purposes, we take a single instance (the argmax from the discrete posterior) as learned hyperparameter.

\subsection{Construction of the Prior}\label{Sec_ConstructionPrior}
If we actively construct the prior for a given hyperparameter (instead of using a fixed one as in the first experiment), we do this in an iterative fashion (typically two iterations) on a separate dataset: 
\begin{itemize}
    \item[i)] First, since we assume that we have access to the standard choice of the hyperparameters $\alpha_{std}$, we put a uniform prior around $\alpha_{std}$, i.e., $\mathcal{U}[C_1, C_2]$, where $C_1 < \alpha_{std}$ and $C_2 > \alpha_{std}$ depend on the user-specified convergence probability, i.e., they are chosen more "aggressive", if a smaller convergence guarantee has to be satisfied. \textit{In our experiments, we actually used $C_1 = \frac{0.5}{\epsilon_{conv}} \frac{2}{L_{max}}$ and $C_2 = \frac{3}{\epsilon_{conv}} \frac{2}{L_{max}}$ for the step-size parameter and $C_1 = \frac{1}{2} \beta_{std}$ and $C_2 = 2 \beta_{std}$ for the extrapolation parameter. Here $\epsilon_{conv}$ denotes the user-specified convergence probability. Initially, we started also with more "aggressive" values for the extrapolation parameter depending on the convergence probability. However, we found that the learned values almost exclusively ended up in that range, such that we directly restricted it.}
    \item[ii)] Then we run the learning procedure with this prior dataset. As described above, this yields a discrete distribution over some samples from the initial prior. From these samples, we retain only those that satisfy the user-specified convergence probability (see Section \ref{Sec_EnsuringConvProb}) and, if these are many, only those with highest posterior density.
    \item[iii)] Then we build a new uniform distribution $\mathcal{U}[a, b]$ as initial distribution for the next iteration (i.e., start from ii) again). For this, we use the standard estimators for $a$ and $b$, i.e., $\min$ and $\max$ over the remaining samples. 
\end{itemize}  
Note that this procedure is contractive, i.e., it does not yield a distribution that puts mass outside the very first initial distribution.

\subsection{Ensuring a Certain Convergence Probability} \label{Sec_EnsuringConvProb}
As described above and in the main text, ensuring the convergence probability is part of the construction of the prior. For this, we simply split the prior data set into two parts of size $N_{prior, 1}$ and $N_{prior, 2}$ (typically $N_{prior, 1} \approx N_{prior, 2}$). The first one is used in the learning procedure in the construction of the prior as described above in Section \ref{Sec_ConstructionPrior}, and the second one is used as a separate test set to check for the convergence probability. Here we use the standard estimator for the binomial distribution $\hat{p}_{conv}(\alpha) = \frac{N_{conv}}{N_{prior, 2}}$. Based on this estimate, we only keep those samples in Section \ref{Sec_ConstructionPrior} that satisfy the user-specified convergence probability during the construction of the prior. Hence, since the construction of the prior is contractive (as described in Section \ref{Sec_ConstructionPrior}), this constrains the prior to only put mass on regions that satisfy the convergence guarantee. However, as seen in the experiments, it is also partly rather conservative.

\newpage
\section{Further Remarks \& Definitions}
This section provides further details on statements made throughout the paper for which no proof has yet been provided. Furthermore, we provide a few (standard) definitions that were used in the main text, but might not be known by every reader.

\subsection{Further Definitions}\label{Appendix_Definitions}
The following two definitions are needed in Definition \ref{Def_Data_Dependent_Distribution} of a data-dependent distribution.
\begin{Def}[Polish Space]
    A topological space $\mathcal{X}$ that is separable, i.e. it has a countable dense subset, and completely metrizable, i.e. there is a complete metric that generates the topology, is called a Polish space.
\end{Def}

\begin{Def}[Markov Kernel]
    Let $(\Omega_1, \mathcal{A}_1)$, $(\Omega_2, \mathcal{A}_2)$ be measurable spaces. A map
    $$
        \kappa: \Omega_1 \times \mathcal{A}_2 \to [0,1] 
    $$
    is called a Markov kernel, if:
    \begin{itemize}
        \item[i)] For every $\omega_1 \in \Omega_1$, the map $A_2 \mapsto \kappa(\omega_1, A_2)$ is a probability measure on $\mathcal{A}_2$.
        \item[ii)] For every $A_2 \in \mathcal{A}_2$, the map $\omega_1 \to \kappa(\omega_1, A_2)$ is measurable.
    \end{itemize}
\end{Def}

\subsection{On the Measurability Assumption of $p(\alpha)$} 

\begin{Lemma}\label{Proof_Rem_Meas_p}
    Let $\pspace$ be a probability space and $\rvec{H}: \pspace \to \mathcal{H}$, $\rvec{S}: \pspace \to \Theta$ be random variables. Assume that $\Theta$ is a Polish space and that $\rvec{S}$ and $\rvec{H}$ are independent. Then there is a measurable function $p': \mathcal{H} \to [0,1]$, such that it holds:
    $$
        p' \circ \rvec{H} = p \circ \rvec{H} 
        \qquad \mathbb{P}_{\rvec{H}}-a.s.
    $$
\end{Lemma}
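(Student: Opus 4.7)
The plan is to view $p(\rvec{H})$ as the conditional expectation of a suitable event given $\rvec{H}$, and then extract a measurable function of $\rvec{H}$ via the Doob--Dynkin lemma. The critical ingredient is joint measurability of the indicator of the convergence set, which is where the independence and Polishness assumptions come in.

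First, I would establish joint measurability of the underlying set. Consider
$$
    B := \{(\alpha, \theta) \in \mathcal{H} \times \Theta : \ell(\mathcal{A}(\alpha, x^{(0)}, \theta), \theta) \le \ell(x^{(0)}, \theta)\} \,.
$$
Since $\mathcal{H}$ and $\Theta$ are Polish, hence second countable, the product Borel $\sigma$-algebra $\mathcal{B}(\mathcal{H} \times \Theta)$ coincides with $\mathcal{B}(\mathcal{H}) \otimes \mathcal{B}(\Theta)$. Measurability of $\mathcal{A}$ and $\ell$ then implies that the map $(\alpha, \theta) \mapsto \ell(\mathcal{A}(\alpha, x^{(0)}, \theta), \theta) - \ell(x^{(0)}, \theta)$ is measurable with respect to $\mathcal{B}(\mathcal{H}) \otimes \mathcal{B}(\Theta)$, so $B$ lies in the product $\sigma$-algebra. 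By construction, the $\alpha$-section of $B$ equals $C(\alpha)$.

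Second, define the bounded random variable $Y := \mathds{1}_B(\rvec{H}, \rvec{S})$ on $\pspace$ and compute $\mathbb{E}[Y \mid \rvec{H}]$. Since $\rvec{H} \perp \rvec{S}$ and $B$ is jointly measurable, the standard ``freezing'' identity (Tonelli applied to the product law $\mathbb{P}_\rvec{H} \otimes \mathbb{P}_\rvec{S}$) gives
$$
    \mathbb{E}[Y \mid \rvec{H}](\omega)
    = \int_\Theta \mathds{1}_B(\rvec{H}(\omega), \theta) \, \mathbb{P}_\rvec{S}(d\theta)
    = \mathbb{P}_\rvec{S}[C(\rvec{H}(\omega))]
    = p(\rvec{H}(\omega))
$$
for $\mathbb{P}$-a.e.\ $\omega$. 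On the other hand, $\mathbb{E}[Y \mid \rvec{H}]$ is $\sigma(\rvec{H})$-measurable, so the Doob--Dynkin factorization yields a Borel-measurable $p': \mathcal{H} \to [0,1]$ with $p' \circ \rvec{H} = \mathbb{E}[Y \mid \rvec{H}]$ $\mathbb{P}$-a.s. Combining the two identities gives $p' \circ \rvec{H} = p \circ \rvec{H}$ $\mathbb{P}_\rvec{H}$-a.s., which is the claim.

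The main obstacle is really only the opening measurability check: one needs that composing the measurable algorithm $\mathcal{A}: \mathcal{H} \times \mathbb{R}^n \times \Theta \to \mathbb{R}^n$ with the section map $(\alpha, \theta) \mapsto (\alpha, x^{(0)}, \theta)$ preserves measurability into the product $\sigma$-algebra, and that joint measurability then survives being plugged into $\ell$. Polishness ensures the identification $\mathcal{B}(\mathcal{H} \times \Theta) = \mathcal{B}(\mathcal{H}) \otimes \mathcal{B}(\Theta)$ so that Tonelli and the freezing lemma apply without any issue around outer or completed $\sigma$-algebras; once this is in place, the Doob--Dynkin step is entirely routine.
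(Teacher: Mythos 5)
Your proof is correct, and it takes a technically different route from the paper's. The paper works with a regular conditional distribution $\mathbb{P}_{\rvec{S} \mid \rvec{H}=\alpha}$ (whose existence is guaranteed by $\Theta$ being Polish), uses independence to identify it with $\mathbb{P}_\rvec{S}$ for $\mathbb{P}_\rvec{H}$-a.e.\ $\alpha$, and obtains measurability of $p'$ for free from the defining property of regular conditional probabilities. You instead first establish joint measurability of the set $B$ whose $\alpha$-sections are $C(\alpha)$, then apply the freezing identity for conditional expectations of independent variables and factor through $\rvec{H}$ via Doob--Dynkin. Your route is more elementary --- it needs only Tonelli on the product law $\mathbb{P}_\rvec{H} \otimes \mathbb{P}_\rvec{S}$ rather than the existence of disintegrations --- and it makes explicit the joint-measurability step that the paper's proof uses only implicitly (the paper needs $g(\alpha,\theta)$ to be jointly measurable for the disintegration formula to apply, but never verifies this). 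In fact, your argument proves slightly more than the lemma asks: once $B \in \mathcal{B}(\mathcal{H}) \otimes \mathcal{B}(\Theta)$ is established, Tonelli already shows that $\alpha \mapsto \int_\Theta \mathds{1}_B(\alpha,\theta)\,\mathbb{P}_\rvec{S}(d\theta) = p(\alpha)$ is itself measurable \emph{everywhere}, so the detour through $\mathbb{E}[Y \mid \rvec{H}]$ and the a.s.\ qualifier are not actually needed --- you could take $p' = p$. The paper's formulation via conditional probabilities has the advantage of generalizing directly to settings where one only has a disintegration and no product structure, but for the lemma as stated your argument is cleaner.
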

\begin{proof}
    Denote by:
    $$
        g(\alpha, \theta) := \ell(\mathcal{A}(\alpha, \theta), \theta) - \ell(x^{(0)}, \theta) \,.
    $$
    Then, by definition, it holds for $\alpha \in \mathcal{H}$:
    $$
        p(\alpha) = \mathbb{P}_\rvec{S}\{g_\alpha(\rvec{S}) \le 0\} = \int_\Theta \mathds{1}_{(-\infty, 0]}(g_\alpha(\theta)) \ \mathbb{P}_\rvec{S}(d\alpha)\,,
    $$
    where $g_\alpha(\theta): \Theta \to \mathbb{R}, \ \theta \mapsto g(\alpha, \theta)$ denotes $g$ with fixed argument $\alpha \in \mathcal{H}$. Since $\Theta$ is a Polish space, there exists a regular version of the conditional probability
    $$
        (\alpha, B) \mapsto \mathbb{P}_{\rvec{S} \vert \rvec{H} = \alpha}(B)
    $$
    of $\rvec{S}$ given $\rvec{H} = \alpha$. With this, for every measurable function $f: \mathcal{H} \times \Theta \to \mathbb{R}$, such that $\mathbb{E}[f(\rvec{H}, \rvec{S})]$ exists, it holds (see e.g. \cite[p.124, Thm. 1.122]{W2013}):
    $$
    \mathbb{E}[f(\rvec{H}, \rvec{S}) \ \vert \ \rvec{H}=\alpha] = \int_\Theta f_\alpha(\theta) \ \mathbb{P}_{\rvec{S} \vert \rvec{H}=\alpha}(d\theta) 
    \qquad \mathbb{P}_\rvec{H}-a.s.
    $$
    By independence, one further gets that (see e.g. \cite[p.126, Thm. 1.123]{W2013}):
    $$
        \mathbb{P}_{\rvec{S} \vert \rvec{H}=\alpha} = \mathbb{P}_\rvec{S} 
        \qquad \mathbb{P}_\rvec{H}-a.s.
    $$
    Hence, in total one gets $\mathbb{P}_\rvec{H}$-a.s.:
    \begin{align*}
        \mathbb{P}\{g(\rvec{H}, \rvec{S}) \ \vert \ \rvec{H}=\alpha\}
        &= \int_\Theta \mathds{1}_{(-\infty, 0]}(g_\alpha(\theta)) \ \mathbb{P}_{\rvec{S} \vert \rvec{H}=\alpha}(d\theta) \\
        &= \int_\Theta \mathds{1}_{(-\infty, 0]}(g_\alpha(\theta)) \ \mathbb{P}_{\rvec{S}}(d\theta) \\
        &= p(\alpha) \,.
    \end{align*}
    Since $\alpha \mapsto \mathbb{P}\{g(\rvec{H}, \rvec{S}) \ \vert \ \rvec{H}=\alpha\}$ is measurable by definition of regular conditional probabilities, the claim follows.
\end{proof}

\subsection{On the Finiteness Assumption of $\Lambda$} 

We denote the open ball of radius $r$ around a point $x$ by $\mathcal{B}(x; r)$ and the corresponding closed ball by $\mathcal{B}[x; r]$. For a set $S$, the notation $\vert S \vert$ denotes the cardinality of $S$.

\begin{Def}[Totally Bounded Space]
    A metric space $(\mathcal{X}, d)$ is called \emph{totally bounded}, if for every $\epsilon > 0$ there exists $n \in \mathbb{N}$, $x_1,...,x_n \in \space{X}$, such that
    $$
        \mathcal{X} \subseteq \bigcup_{i=1}^n \mathcal{B}(x_i;\epsilon) \,.
    $$
\end{Def}

The typical example of a totally bounded space is a compact space. The important property of this space, which is used in the following, is that they have a finite covering number. In the end, this allows again to apply the union bound argument.

\begin{Def}[$\delta$-Covering Number]
    Let $(\mathcal{X}, d)$ be a totally bounded metric space and let $\delta > 0$. A \emph{proper $\delta$-covering} of $\mathcal{X}$ is a finite set $X_\delta \subset \mathcal{X}$, such that
    $$
        \mathcal{X} \subseteq \bigcup_{x \in X_\delta} \mathcal{B}[x;\delta] \,.
    $$
    The minimal cardinality of any $\delta$-covering is denoted $\coveringNumber{\delta}{\mathcal{X}}$ and is called the \emph{$\delta$-covering number} of $\mathcal{X}$:
    $$
        \coveringNumber{\delta}{\mathcal{X}} := \min \{\vert X_\delta \vert \ : \ \text{$X_\delta$ is a proper $\delta$-covering of $\mathcal{X}$}\} \,.
    $$
\end{Def}

Taken together, and using the proof of Theorem \ref{PAPER_PACBayesExponentialFamiliy} as entry point, one gets the following Lemma. This is a direct generalization of the result in Theorem \ref{PAPER_PACBayesExponentialFamiliy}, as in the case where $\Lambda$ is finite, $\Lambda$ can be covered by itself, such that it holds $\coveringNumber{\delta}{\Lambda} = \vert \Lambda \vert$ and $C = 0$.

\begin{Lemma}\label{proof_LambdaFiniteNotNecessary}
    Let $(\Lambda, d)$ be a totally bounded metric space and let $\delta > 0$. Assume that there is a constant $C := C(\delta)$, such that for all $\Tilde{\lambda} \in \Lambda$ it holds: 
    $$
    \sup_{\lambda \in \closedBall{\Tilde{\lambda}}{\delta}} \kappa(\lambda) - \kappa(\Tilde{\lambda}) \le C \,.
    $$
    Finally, assume that $\mathbb{P}_{\rvec{D}_N} \{ \kappa(\lambda) > s  \} \le \exp(-s)$ for all $s \in \mathbb{R}$, $\lambda \in \Lambda$. Then it holds that:
    $$
        \mathbb{P}_{\rvec{D}_N} \Bigl\{ \sup_{\lambda \in \Lambda} \kappa(\lambda) \le \log \Bigl( \frac{\coveringNumber{\delta}{\Lambda}}{\epsilon} \Bigr) + C \Bigr\} \ge 1-\epsilon \,.
    $$
\end{Lemma}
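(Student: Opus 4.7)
The plan is to mimic the proof of Theorem \ref{PAPER_PACBayesExponentialFamiliy}, but replace the union bound over the finite set $\Lambda$ with a union bound over a finite $\delta$-covering of $\Lambda$, and then use the growth-control hypothesis to lift the resulting bound from the covering back to all of $\Lambda$ at the cost of an additive constant $C$.

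Concretely, I would first fix a proper $\delta$-covering $\Lambda_\delta \subset \Lambda$ of minimal cardinality $\coveringNumber{\delta}{\Lambda}$, which exists because $\Lambda$ is totally bounded. By the standing tail assumption $\mathbb{P}_{\rvec{D}_N}\{\kappa(\lambda) > s\} \le \exp(-s)$ applied at each point of $\Lambda_\delta$, a union bound gives
\begin{equation*}
    \mathbb{P}_{\rvec{D}_N}\Bigl\{\sup_{\Tilde{\lambda} \in \Lambda_\delta} \kappa(\Tilde{\lambda}) > s\Bigr\}
    \le \sum_{\Tilde{\lambda} \in \Lambda_\delta} \mathbb{P}_{\rvec{D}_N}\bigl\{\kappa(\Tilde{\lambda}) > s\bigr\}
    \le \coveringNumber{\delta}{\Lambda} \exp(-s)\,.
\end{equation*}
This is exactly the step from the proof of Theorem \ref{PAPER_PACBayesExponentialFamiliy}, with $\vert \Lambda \vert$ replaced by $\coveringNumber{\delta}{\Lambda}$.

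Next I would use the growth hypothesis to transfer this bound from $\Lambda_\delta$ to all of $\Lambda$. For an arbitrary $\lambda \in \Lambda$, by the covering property there exists $\Tilde{\lambda} \in \Lambda_\delta$ with $\lambda \in \closedBall{\Tilde{\lambda}}{\delta}$, and hence $\kappa(\lambda) \le \kappa(\Tilde{\lambda}) + C$ by assumption. Taking the supremum over $\lambda \in \Lambda$ yields the deterministic comparison
\begin{equation*}
    \sup_{\lambda \in \Lambda} \kappa(\lambda) \le \sup_{\Tilde{\lambda} \in \Lambda_\delta} \kappa(\Tilde{\lambda}) + C\,.
\end{equation*}
Combining the two gives $\mathbb{P}_{\rvec{D}_N}\bigl\{\sup_{\lambda \in \Lambda} \kappa(\lambda) > s + C\bigr\} \le \coveringNumber{\delta}{\Lambda} \exp(-s)$, and plugging in $s = \log(\coveringNumber{\delta}{\Lambda}/\epsilon)$ and taking complementary events produces the claim.

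There is no real obstacle here: the chaining-type argument is classical and the growth condition is precisely tailored so that the pointwise tail control on a $\delta$-net upgrades to a uniform bound on $\Lambda$. The only point requiring a little care is that the inequality in the growth hypothesis is phrased as $\sup_{\lambda \in \closedBall{\Tilde{\lambda}}{\delta}} \kappa(\lambda) - \kappa(\Tilde{\lambda}) \le C$, which is exactly what is needed to pass from each centre of the covering to the whole ball it covers; no measurability subtleties arise because the final bound only compares random variables that are already covered by the tail assumption. In the degenerate case of a finite $\Lambda$, taking $\Lambda_\delta = \Lambda$ and $C = 0$ recovers Theorem \ref{PAPER_PACBayesExponentialFamiliy}, so the statement is a genuine generalisation.
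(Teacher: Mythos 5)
Your proposal is correct and follows essentially the same argument as the paper: cover $\Lambda$ by $\coveringNumber{\delta}{\Lambda}$ closed $\delta$-balls, use the growth condition to reduce the supremum over $\Lambda$ to a maximum over the centres at the cost of the additive constant $C$, apply the union bound with the pointwise tail estimate, and choose $s$ accordingly. The only (immaterial) difference is the order in which you apply the union bound and the deterministic comparison.
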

\begin{proof}
    Since $(\Lambda, d)$ is a totally bounded metric space, its covering number $\coveringNumber{\delta}{\Lambda}$ is well-defined and finite. For notational simplicity, set $N := \coveringNumber{\delta}{\Lambda}$. Hence, there are $\lambda_1, ..., \lambda_N \in \Lambda$, such that:
    $$
        \Lambda \subseteq \bigcup_{i=1}^N \closedBall{\lambda_i}{\delta} \,.
    $$
    Therefore, one directly gets:
    $$
        \sup_{\lambda \in \Lambda} \kappa(\lambda) \le \max_{i=1,...,N} \ \sup_{\lambda \in \closedBall{\lambda_i}{\delta}} \kappa(\lambda) \,.
    $$
    Further, by assumption it holds:
    $$
        \sup_{\lambda \in \closedBall{\lambda_i}{\delta}} \kappa(\lambda) 
        = \kappa(\lambda_i) + \sup_{\lambda \in \closedBall{\lambda_i}{\delta}} \Bigr(\kappa(\lambda) - \kappa(\lambda_i) \Bigl) 
        \le \kappa(\lambda_i) + C \,.
    $$
    Hence, in total one gets for $s \in \mathbb{R}$:
    \begin{align*}
        \mathbb{P}_{\rvec{D}_N} \Bigl\{ \sup_{\lambda \in \Lambda} \kappa(\lambda) > s \Bigr\} 
        &\le \mathbb{P}_{\rvec{D}_N} \Bigl\{ \max_{i=1,...,N} \kappa(\lambda_i) + C > s \Bigr\}  \\
        &= \mathbb{P}_{\rvec{D}_N} \Bigl\{ \bigcup_{i=1}^N \{\kappa(\lambda_i) + C > s\} \Bigr\} \\
        &\le \sum_{i=1}^N \mathbb{P}_{\rvec{D}_N} \Bigl\{ \kappa(\lambda_i) + C > s \Bigr\} \\
        &\le \sum_{i=1}^N \exp(C-s) \\
        &= N \exp(C-s) \,.
    \end{align*}
    Since $\epsilon = N \exp(C-s) \ \iff s = \log\Bigl( \frac{N}{\epsilon} \Bigr) + C$, one gets:
    $$
    \mathbb{P}_{\rvec{D}_N} \Bigl\{ \sup_{\lambda \in \Lambda} \kappa(\lambda) > \log\Bigl( \frac{N}{\epsilon} \Bigr) + C \Bigr\} \le \epsilon\,.
    $$
    Taking the complementary event yields the result:
    $$
    \mathbb{P}_{\rvec{D}_N} \Bigl\{ \sup_{\lambda \in \Lambda} \kappa(\lambda) \le \log\Bigl( \frac{N}{\epsilon} \Bigr) + C \Bigr\} \ge 1-\epsilon\,.
    $$
\end{proof}

\newpage
\bibliography{z_bib.bib}

\end{document}